\documentclass{article}

\PassOptionsToPackage{numbers, compress}{natbib}
\usepackage[main, final]{neurips_2026} 

\usepackage[utf8]{inputenc} 
\usepackage[T1]{fontenc}    
\usepackage{hyperref}       
\usepackage{url}            
\usepackage{booktabs}       
\usepackage{amsfonts}       
\usepackage{nicefrac}       
\usepackage{microtype}      
\usepackage{xcolor}         

\usepackage{amsmath} 
\usepackage{amsthm}
\usepackage{algorithm}
\usepackage{algpseudocode}
\usepackage{graphicx} 
\usepackage{float}
\usepackage{wrapfig}

\theoremstyle{plain} 
\newtheorem{theorem}{Theorem}

\newtheorem{proposition}[theorem]{Proposition}

\theoremstyle{definition} 

\theoremstyle{remark} 
\newtheorem{remark}{Remark}

\title{Fractional State Space Transition for Long Sequence Modeling} %

\author{Ivan Kobyzev\thanks{Equal contribution}\hspace{3mm}Abbas Ghaddar\footnotemark[1]\hspace{3mm}Ali Nasiri-Sarvi\hspace{3mm}Lifeng Shang\hspace{3mm}Yufei Cui \\
Huawei Noah’s Ark Lab, Montreal Research Center, Canada\\
{\footnotesize\texttt{\{ivan.kobyzev,abbas.ghaddar,shang.lifeng,yufei.cui\}@huawei.com}}}

\newcommand{\fracssm}{\textsc{Frac}}
\newcommand{\gdn}{\textsc{GDN}}
\newcommand{\mambatwo}{\textsc{Mamba2}}
\newcommand{\mambathree}{\textsc{Mamba3}}

\usepackage{amsfonts}

\usepackage{wrapfig}
\usepackage{caption}
\usepackage{placeins} 
\usepackage{multirow}

\begin{document}

\maketitle

\begin{abstract}
State Space Models (SSMs) compress sequence history into a bounded recurrent state, making the resulting memory law a central architectural choice for long-context performance. Most modern SSMs rely on ODE-based dynamics that lead to exponential forgetting, limiting their ability to retain information over broad temporal ranges. We introduce \fracssm{}, a selective SSM architecture derived from fractional dynamics that replaces this exponential decay with power-law long memory. To make fractional dynamics practical, \fracssm{} approximates the heavy-tailed target  kernel with a finite-state, log-spaced sum of exponential modes. This construction turns fractional memory into an efficient recurrent module with parallel training and prefill, while retaining bounded-state autoregressive decoding. Extensive experiments, including 1.3B-parameter language modeling, demonstrate that \fracssm{} consistently improves long-context performance over state-of-the-art SSM baselines while staying competitive on short-context. 
These results show that fractional dynamics provide a practical and effective prior for long-context SSMs. Code: \url{https://github.com/anasiri/frac-ssm}
\end{abstract}

\section{Introduction}

Due to the quadratic complexity of the Transformer self-attention module~\cite{vaswani2017attention}, a line of research has focused on developing more efficient linear alternatives based on State Space Models (SSMs)~\cite{gu2022s4,yang2024gla,mamba2,yang2024delta}. SSMs efficiency is based on compressing the entire sequence history into a finite recurrent state. This compression constraint makes the model’s internal memory a central design choice for long-context sequence modeling, which can be naturally understood through the memory law induced by the underlying dynamics. 

Most SSMs, such as Mamba~\citep{gu2024mamba,mamba2,lahoti2026mamba3}, base their internal memory design on ordinary differential equations (ODEs) and their discretizations~\citep{gu2022s4,hasani2023liquid}. Despite their success and widespread adoption, the ODE dynamics underlying these models naturally induce exponential forgetting, causing the influence of past inputs to decay exponentially with lag~\citep{wang2023ssmexpmemory}. That inductive bias is not optimal for modeling long sequences in which past events can retain non-negligible influence over broad temporal ranges. Fractional differential equations (FDEs) have been widely used in physics and applied mathematics to model such hereditary effects, as their solutions depend on the full history through heavy-tailed kernels~\citep{Diethelm2010,Mainardi2010,MetzlerKlafter2004,MainardiGorenflo2007}. Thus, FDEs can offer an alternative foundation for designing the internal memory of SSMs, rather than refining information selection or routing within exponentially forgetting ODE-based recurrent models~\citep{yang2025gdn}. However, a major challenge in making FDEs practical for SSMs is that, unlike ODEs, FDEs are non-Markovian: the state at a given time depends on the entire past, making them not directly compatible with finite-state recurrent layers.

In this paper, we introduce \fracssm{}, a novel selective SSM architecture derived from fractional dynamics. We develop a theoretical framework that makes fractional long memory compatible with efficient recurrent computation by approximating the target power-law kernel with a finite-state, log-spaced sum of exponential modes. The resulting \fracssm{} layer retains the computational advantages of modern selective SSMs, while also supporting hardware-efficient parallel training, prefill, and bounded-state autoregressive decoding.

On synthetic long-tail and recall-retrieval benchmarks, \fracssm{} achieves the strongest length extrapolation and highest recall performance among state-of-the-art SSMs, consistent with its intended heavy-tailed inductive bias and long-memory behavior. Furthermore, when training 1.3B-parameter language models from scratch, \fracssm{} outperforms strong SSM baselines such as Mamba and Gated DeltaNet (GDN)~\citep{yang2025gdn} on long-context evaluations, while remaining competitive on standard short-context language-modeling evaluations. Taken together, these results suggest that the memory law induced by the underlying dynamics is a powerful architectural axis for designing efficient long-context sequence models.

\section{Preliminaries}
\label{sec:background}

In this section, we review how dynamical systems with long memory can be modeled using FDEs, then we develop the theoretical framework that underlies the design of our \fracssm{} model in Section~\ref{sec:method}.

\subsection{Long memory in dynamical systems}
\label{subsec:physics_motivation}

Classical state-space models can be viewed as discrete-time counterparts of ordinary differential equations. In this setting, memory typically decays exponentially, so the influence of past events is controlled by a characteristic timescale. This is a natural model for many Markovian systems, where the future depends on the present state alone.
However, many systems in nature do not behave this way. In anomalous diffusion, dielectric relaxation, viscoelasticity, and related transport phenomena, the present state depends on a broad and weighted history of the past rather than on a single characteristic timescale~\citep{MetzlerKlafter2004,MainardiGorenflo2007}. Such systems exhibit long-memory or hereditary behavior, often described by kernels with heavy tails.
Fractional calculus~\citep{Diethelm2010,Mainardi2010} provides a convenient mathematical language for this regime. By allowing derivatives of non-integer order, it describes dynamics in which memory is distributed over the past rather than concentrated around a single exponential decay law.

\subsection{Caputo fractional differential equations}

To make the discussion above formal, let us start with the familiar first-order linear dynamics:
\begin{equation}
\dot{h}(t) = -\lambda h(t) + u(t),
\qquad
\lambda > 0.
\label{eq:ode-relaxation}
\end{equation}
This is the continuous-time analogue of a simple state-space update. Here the evolution of the state $h(t)$ is determined by the external input $u(t)$ together with the linear term $-\lambda h(t)$, which causes past information to decay at a rate set by $\lambda$. In this sense, the system is organized around a characteristic timescale of order $\lambda^{-1}$, which controls how quickly the influence of the past fades. As discussed in Section~\ref{subsec:physics_motivation}, this picture is no longer adequate when memory is distributed over the past rather than concentrated around a single scale.

A standard way to model this regime is to replace the ordinary derivative by a fractional derivative. In this paper, we use the Caputo fractional derivative~\citep{Diethelm2010}. For $0<\alpha<1$, it is defined by
\begin{equation}
{}^{C}D_{t}^{\alpha} f(t)
=
\frac{1}{\Gamma(1-\alpha)}
\int_{0}^{t}
\frac{f'(\tau)}{(t-\tau)^{\alpha}}
\, d\tau .
\label{eq:caputo-def}
\end{equation}
This expression makes the long-memory mechanism explicit: the derivative at time $t$ depends on the entire past history on $[0,t]$, with recent increments weighted more strongly but with a long algebraic tail that keeps older events relevant.
Replacing the ordinary derivative in \eqref{eq:ode-relaxation} by the Caputo derivative gives the fractional relaxation equation:
\begin{equation}
{}^{C}D_{t}^{\alpha} h(t)
=
-\lambda h(t) + u(t),
\qquad
\lambda > 0 ,
\label{eq:fractional-relaxation}
\end{equation}
which we use as the basic continuous-time model of long-memory dynamics. When $\alpha=1$, \eqref{eq:fractional-relaxation} reduces to the ordinary first-order system \eqref{eq:ode-relaxation}. For $0<\alpha<1$, the dynamics become nonlocal in time and the effective memory kernel becomes heavy-tailed. The parameter $\alpha$ therefore controls the strength of the long-memory effect: values closer to $1$ recover more local, ODE-like behavior, while smaller values produce slower forgetting and broader memory across timescales \citep{MainardiGorenflo2007}.

Eq.~\eqref{eq:fractional-relaxation} gives the right continuous-time model of long-memory dynamics, but it is not yet in the form needed for a state-space layer. Direct discretizations of FDEs are typically history-dependent: the update at time $t$ depends on the entire past trajectory rather than on a finite-dimensional recurrent state~\citep{Diethelm2010}. Rather than using a generic black-box fractional solver, we will construct a Markovian approximation of this dynamics that is compatible with efficient recurrent computation. Next, we develop this construction by expressing the fractional relaxation kernel in terms of Mittag--Leffler functions and then lifting it into a representation that admits a finite state-space realization.

\subsection{Mittag--Leffler relaxation and fractional kernels}

To understand what kind of memory law is induced by \eqref{eq:fractional-relaxation}, we now ask the same question one would ask for an ordinary linear system: how does the state decay in the absence of input, and how does it respond to an external forcing? In the classical case, both are governed by the exponential function. In the fractional case, the corresponding role is played by the Mittag--Leffler function~\citep{gorenflo2014}.

We first consider the homogeneous version of \eqref{eq:fractional-relaxation}: ${}^{C}D_{t}^{\alpha} h(t)
=
-\lambda h(t).$ Its solution is: $h(t)
=
E_{\alpha}(-\lambda t^{\alpha})\, h(0),$ where
\begin{equation}
\label{eq:Ea}
E_{\alpha}(z)
:=
\sum_{k=0}^{\infty}
\frac{z^{k}}{\Gamma(\alpha k + 1)}
\end{equation}
is the one-parameter Mittag--Leffler function \citep{LuchkoGorenflo1999}. Thus, in the fractional setting, the exponential decay law of the ordinary system is replaced by Mittag--Leffler relaxation.
We next return to the forced system \eqref{eq:fractional-relaxation}. Its response to the input is described by the impulse-response kernel

\begin{equation}
\label{eq:frac-impulse}
g_{\alpha,\lambda}(t)
:=
t^{\alpha-1}
E_{\alpha,\alpha}(-\lambda t^{\alpha}),
\qquad
\text{where}
\quad
E_{\alpha,\beta}(z)
:=
\sum_{k=0}^{\infty}
\frac{z^{k}}{\Gamma(\alpha k + \beta)} 
\end{equation}

is the two-parameter Mittag--Leffler function. The full solution to the FDE \eqref{eq:fractional-relaxation} can be written as:
\begin{equation}
h(t)
=
E_{\alpha}(-\lambda t^{\alpha})\, h(0)
+
\int_{0}^{t} g_{\alpha,\lambda}(t-\xi)\, u(\xi)\, d\xi ,
\label{eq:frac-full-solution}
\end{equation}
so the kernel $g_{\alpha,\lambda}$ describes how past inputs are accumulated over time~\citep{LuchkoGorenflo1999}.

\begin{remark}
Fractional relaxation can be informally associated with power-law memory. More precisely, the relevant Mittag--Leffler kernels are not exact power laws for all \(t\), but they do have asymptotically polynomial tails as \(t\to\infty\). 
For any fixed \(\alpha\in(0,1)\) and \(\lambda>0\), the Mittag--Leffler homogeneous kernel satisfies~\citep{gorenflo2014}:
\begin{equation}
E_{\alpha}(-\lambda t^\alpha)
=
\frac{1}{\lambda\,\Gamma(1-\alpha)}\, t^{-\alpha}
+
O(t^{-2\alpha}),
\qquad t\to\infty.
\label{eq:mittag-tail}
\end{equation}
Likewise, the impulse-response kernel satisfies~\citep{gorenflo2014}:
\begin{equation}
g_{\alpha,\lambda}(t)
=
t^{\alpha-1}E_{\alpha,\alpha}(-\lambda t^\alpha)
=
\frac{\alpha}{\lambda^2\Gamma(1-\alpha)}\, t^{-\alpha-1}
+
O(t^{-2\alpha-1}),
\qquad t\to\infty.
\label{eq:gal-tail}
\end{equation}
In particular, the homogeneous solution decays as \(t^{-\alpha}\), in contrast to the exponential decay of the ordinary first-order system. This asymptotically polynomial tail is the source of the broader timescale coverage that motivates the fractional construction for long-memory sequence modeling.
\end{remark}

At the same time, the solution \eqref{eq:frac-full-solution} is still not in the finite-dimensional Markovian form needed for an efficient state-space layer. The next subsection addresses this issue by rewriting the fractional kernel in a form that can be approximated by a finite bank of exponential modes.

\subsection{Diffusive representations of fractional kernels}

The key structural fact we need is that the fractional kernel admits a representation as a continuous mixture of ordinary exponential decays.

\begin{theorem}
\label{thm:diffusive-representation}
For $0<\alpha < 1$ and $\lambda > 0$, the kernels appearing in the solution \eqref{eq:frac-full-solution} of the fractional differential equation \eqref{eq:fractional-relaxation} admit  nonnegative diffusive representations. In particular, there exist nonnegative densities $R_{\alpha, \lambda}$ and $H_{\alpha,\lambda}$ such that

\begin{equation}
\label{eq:Ea-diffusive}
\begin{aligned}
E_{\alpha}(-\lambda t^{\alpha})
&=
\int_{0}^{\infty}
e^{-t/\tau}\, R_{\alpha,\lambda}(\tau)\, d\tau,
\qquad
g_{\alpha,\lambda}(t)
&=
\int_{0}^{\infty}
e^{-t/\tau}\, H_{\alpha,\lambda}(\tau)\, d\tau .
\end{aligned}
\end{equation}

\end{theorem}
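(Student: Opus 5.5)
We will prove both representations through the Laplace transform, reducing everything to the classical fact that $E_\alpha(-\lambda t^\alpha)$ and $g_{\alpha,\lambda}$ are completely monotone for $0<\alpha<1$. The plan is to compute the transforms explicitly and then invert them along the negative real axis, which produces the densities in closed form.

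\textbf{Step 1: Laplace transforms.} From the series \eqref{eq:Ea} and \eqref{eq:frac-impulse}, termwise Laplace transformation (justified for $\operatorname{Re}s$ large and then extended by analytic continuation) gives
\begin{equation*}
\mathcal{L}[E_\alpha(-\lambda t^\alpha)](s)=\frac{s^{\alpha-1}}{s^\alpha+\lambda},
\qquad
\mathcal{L}[g_{\alpha,\lambda}](s)=\frac{1}{s^\alpha+\lambda}.
\end{equation*}
Alternatively, these follow directly from applying the Laplace transform to \eqref{eq:fractional-relaxation}. Both functions extend analytically to $\mathbb{C}\setminus(-\infty,0]$, with the branch cut on the negative axis.

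\textbf{Step 2: Bromwich inversion and contour deformation.} We write $f(t)=\frac{1}{2\pi i}\int_{Br}e^{st}F(s)\,ds$ and deform the contour to a Hankel contour wrapping $(-\infty,0]$. This requires three checks:
\begin{itemize}
\item $s^\alpha+\lambda$ has no zeros off the cut, since $|\arg s^\alpha|<\alpha\pi<\pi$.
\item The large-arc contribution vanishes by Jordan's lemma, because $F(s)=O(|s|^{-1})$ for the first transform and $O(|s|^{-\alpha})$ for the second.
\item The small circle around $0$ contributes nothing in the limit: the integrand is $O(|s|^{\alpha-1})$ for $E_\alpha$ and bounded for $g$.
\end{itemize}
Setting $s=re^{\pm i\pi}$ on the two banks yields $f(t)=\int_0^\infty e^{-rt}K(r)\,dr$ with $K(r)=-\frac1\pi\operatorname{Im}F(re^{i\pi})$. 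Computing explicitly,
\begin{equation*}
K_E(r)=\frac{1}{\pi}\frac{\lambda r^{\alpha-1}\sin(\alpha\pi)}{r^{2\alpha}+2\lambda r^\alpha\cos(\alpha\pi)+\lambda^2},\qquad
K_g(r)=\frac{1}{\pi}\frac{r^{\alpha}\sin(\alpha\pi)}{r^{2\alpha}+2\lambda r^\alpha\cos(\alpha\pi)+\lambda^2}.
\end{equation*}

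\textbf{Step 3: Nonnegativity.} Since $0<\alpha<1$, we have $\sin(\alpha\pi)>0$. The denominator is $|r^\alpha e^{i\alpha\pi}+\lambda|^2>0$, because $r^\alpha e^{i\alpha\pi}$ is never real and negative. Hence both densities are positive on $(0,\infty)$, and integrability near $0$ and $\infty$ follows from the visible power behavior:
\begin{itemize}
\item For $K_E$, the behavior is $r^{\alpha-1}$ near $0$ and $r^{-\alpha-1}$ at infinity.
\item For $K_g$, the behavior is $r^{\alpha}$ near $0$ and $r^{-\alpha}$ at infinity, which is integrable against $e^{-rt}$ for $t>0$.
\end{itemize}

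\textbf{Step 4: Change of variables.} Finally, substituting $r=1/\tau$, $dr=\tau^{-2}d\tau$, converts the decay-rate densities into timescale densities:
\begin{equation*}
R_{\alpha,\lambda}(\tau)=\tau^{-2}K_E(1/\tau),\qquad H_{\alpha,\lambda}(\tau)=\tau^{-2}K_g(1/\tau).
\end{equation*}
Both are manifestly nonnegative, which gives \eqref{eq:Ea-diffusive}.

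The main obstacle is the rigorous justification of the contour deformation in Step 2, specifically the vanishing of the arcs near the cut, uniformly in the angle. This is the point where $t>0$ must be kept strictly positive. As a fallback, Bernstein's theorem together with known complete monotonicity results (Pollard's theorem for $E_\alpha(-x)$, and its composition with $t^\alpha$) would give existence of nonnegative measures, but it would not give their explicit form; the explicit contour computation is therefore preferable.
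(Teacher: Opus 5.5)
Your proposal is correct and follows essentially the same route as the paper. The paper quotes the spectral densities $K_\alpha(r;\lambda)$ and $K_{\alpha,\alpha}(r;\lambda)$ from Mainardi, where they come from exactly your Laplace-transform and Hankel-contour inversion, and then substitutes $\tau=1/r$; your $K_E$, $K_g$, and the resulting $R_{\alpha,\lambda}$, $H_{\alpha,\lambda}$ agree with theirs and with \eqref{eq:r-density}, \eqref{eq:h-density}. One small addition is needed: your contour argument gives the first identity only for $t>0$, and the case $t=0$ (which the paper later uses to get $\int_0^\infty R_{\alpha,\lambda}\,d\tau=1$) follows by dominated convergence because $K_E\in L^1(0,\infty)$.
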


See Appendix~\ref{app:diffusive-proof} for more details. 
This representation is still infinite-dimensional. The next subsection shows how to approximate it on a bounded horizon by a finite bank of exponential modes, which is the form we will later turn into a discrete state-space transition.

\subsection{Finite sum-of-exponentials (SoE) approximation on a bounded horizon}
\label{subsec:soe-background}

The diffusive representation of Theorem~\ref{thm:diffusive-representation} still involves a continuum of timescales and therefore cannot be used directly as a finite-state recurrent transition. To obtain a finite memory bank, we approximate this integral on the bounded range of timescales relevant for the horizon of interest by a finite weighted sum of exponential modes. Because fractional kernels spread mass across many orders of magnitude in time, this construction is naturally organized on a logarithmic timescale grid. Related exponential-sum constructions of this type are standard in numerical methods for fractional kernels \citep{JiangZhangZhang2017,Chaudhary2025}. The next theorem states the exact structural facts from this approximation that is the key component for our model in Section~\ref{sec:method}.

\begin{theorem}
\label{thm:soe-approx}
Fix \(0<\alpha<1\), \(\lambda>0\), and \(0<T<\infty\). Then for every \(\varepsilon>0\)
there exist \(M\in\mathbb N\),
\(\tau_0>0\), and \(q>1\),
a geometrically spaced bank of positive timescales $ \tau_m=\tau_0 q^{m-1}$ for $m=1,\dots,M$, 
and positive coefficients \(c_m(\alpha,\lambda)\),
\(d_m(\alpha,\lambda)\) such that
\begin{align}
\sup_{t\in[0,T]}
\left|
E_{\alpha}(-\lambda t^\alpha)
-
\sum_{m=1}^{M} c_m(\alpha,\lambda)e^{-t/\tau_m}
\right|
&\le \varepsilon,
\label{eq:shared-soe-homogeneous}
\\
\int_0^T
\left|
g_{\alpha,\lambda}(t)
-
\sum_{m=1}^{M} d_m(\alpha,\lambda)e^{-t/\tau_m}
\right|dt
&\le \varepsilon.
\label{eq:shared-soe-forced}
\end{align}
Moreover, the coefficients may be chosen so that
\[
\frac{d_m(\alpha,\lambda)}{c_m(\alpha,\lambda)}
=
\frac{1}{\lambda\tau_m},
\qquad m=1,\dots,M,
\]
and, for the large-timescale part of the geometric bank $\tau_m$:
\begin{equation} 
\label{eq:asymptotics}
c_m(\alpha,\lambda)
=
\frac{(\log q)\sin(\pi\alpha)}{\pi\lambda}\,
\tau_m^{-\alpha}\bigl(1+O(\tau_m^{-\alpha})\bigr).
\end{equation}
\end{theorem}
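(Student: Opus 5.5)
The plan is to make the densities of Theorem~\ref{thm:diffusive-representation} explicit, rewrite the diffusive integrals in the logarithmic variable $s=\log\tau$, and discretize them with a uniform rectangle rule of step $h=\log q$, truncated to a finite window. Uniform nodes in $s$ are exactly the geometric bank $\tau_m=\tau_0q^{m-1}$. The coefficient identities then come from the form of the quadrature weights, and the error bounds come from a truncation estimate plus a quadrature estimate.

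\textbf{Step 1: explicit densities.} Via the Laplace transform $s^{\alpha-1}/(s^\alpha+\lambda)$ of $E_\alpha(-\lambda t^\alpha)$ and a Bromwich-contour deformation onto the negative real axis (the content behind Theorem~\ref{thm:diffusive-representation}), I would use the standard formula
\[
E_\alpha(-\lambda t^\alpha)=\int_0^\infty e^{-rt}K(r)\,dr,\qquad K(r)=\frac{\lambda\sin(\pi\alpha)}{\pi}\,\frac{r^{\alpha-1}}{r^{2\alpha}+2\lambda r^\alpha\cos(\pi\alpha)+\lambda^2}.
\]
The denominator is strictly positive because $r^{2\alpha}+2\lambda r^\alpha\cos(\pi\alpha)+\lambda^2 \ge (r^\alpha-\lambda)^2>0$ off the single point $r^\alpha=\lambda$, and at that point it equals $2\lambda^2(1+\cos\pi\alpha)>0$. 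Hence $K>0$.

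Next, the identity $\frac{d}{dt}E_\alpha(-\lambda t^\alpha)=-\lambda t^{\alpha-1}E_{\alpha,\alpha}(-\lambda t^\alpha)$ (termwise differentiation of \eqref{eq:Ea}) gives
\[
g_{\alpha,\lambda}(t)=\lambda^{-1}\int_0^\infty r\,e^{-rt}K(r)\,dr .
\]
Substituting $r=1/\tau=e^{-s}$ yields
\[
E_\alpha(-\lambda t^\alpha)=\int_{\mathbb R}e^{-te^{-s}}w(s)\,ds,\qquad w(s):=rK(r)\big|_{r=e^{-s}},
\]
and likewise $g_{\alpha,\lambda}(t)=\int_{\mathbb R} e^{-te^{-s}}\,\frac{e^{-s}}{\lambda}\,w(s)\,ds$.

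\textbf{Step 2: quadrature and the structural claims.} I take nodes $s_m=\log\tau_0+(m-1)h$ and set
\[
c_m:=h\,w(s_m),\qquad d_m:=h\,\frac{e^{-s_m}}{\lambda}\,w(s_m).
\]
Both are positive, and $d_m/c_m=1/(\lambda\tau_m)$ holds by construction. For the asymptotics, as $\tau\to\infty$ we have $r\to0$, so
\[
rK(r)=\frac{\sin(\pi\alpha)}{\pi\lambda}\,r^\alpha\bigl(1+O(r^\alpha)\bigr).
\]
This gives \eqref{eq:asymptotics} with $h=\log q$.

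\textbf{Step 3: error bounds.} For \eqref{eq:shared-soe-homogeneous}, I split the error into truncation and discretization.

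\emph{Truncation.} The large-$\tau$ tail contributes at most $\int_0^{r_{\min}}K\,dr=O(r_{\min}^\alpha)$. The small-$\tau$ tail contributes at most $\int_{r_{\max}}^\infty K\,dr=O(r_{\max}^{-\alpha})$, uniformly in $t\ge0$ since $e^{-rt}\le1$. Both are below $\varepsilon/3$ for a suitable window.

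\emph{Discretization.} On the window, the rectangle-rule error is at most $h\cdot\mathrm{TV}(f_t)$ with $f_t(s)=e^{-te^{-s}}w(s)$. Because $s\mapsto e^{-te^{-s}}$ is monotone with values in $[0,1]$,
\[
\mathrm{TV}(f_t)\le \mathrm{TV}(w)+\|w\|_\infty,
\]
and the right-hand side is finite and independent of $t$. Choosing $h$ small then gives a uniform bound. Finally $M$ is the window length divided by $h$.

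For \eqref{eq:shared-soe-forced}, I bound the $L^1$ norm by Fubini. Using $\int_0^T e^{-rt}\,dt\le\min(T,1/r)$, the error is controlled by the same truncation-plus-quadrature argument applied to the integrable weight
\[
\lambda^{-1}\,rK(r)\min(T,1/r)\le \lambda^{-1}K(r),
\]
now with integration against $t\in[0,T]$ rather than a supremum. The integrable singularity $t^{\alpha-1}$ of $g$ at $t=0$ is harmless after Fubini.

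\textbf{Main obstacle.} The hardest part is making the quadrature error uniform in $t$ (respectively, integrated in $t$) despite the heavy tails at both ends of the $s$-axis. The total-variation bound above is my first attempt. If it is too crude near the edges of the window, I would instead fold the window edges into the truncation estimate. Since the statement only asks for existence, a single common choice of $(\tau_0,q,M)$ serving both kernels is enough, and taking the more stringent of the two choices works for both.
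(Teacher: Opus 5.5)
Your proposal is correct and follows essentially the same route as the paper: the explicit diffusive densities, the substitution $s=\log\tau$, truncation to a finite window, and an equispaced rule in $\log\tau$ that produces the geometric bank. It also gives the same coefficients $c_m=h\tau_mR_{\alpha,\lambda}(\tau_m)$ and $d_m=h\tau_mH_{\alpha,\lambda}(\tau_m)$, the same Fubini tail bound $\lambda^{-1}(1-e^{-T/\tau})R_{\alpha,\lambda}(\tau)\le\lambda^{-1}R_{\alpha,\lambda}(\tau)$, and the same small-$r$ expansion for the asymptotics. The only real difference is the quadrature estimate: the paper uses the midpoint rule with the modulus of uniform continuity on $[0,T]\times[A,B]$, whereas your total-variation bound gives an explicit $O(h)$ error for the homogeneous kernel that holds uniformly for all $t\ge0$. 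For the forced kernel, though, the weight $\lambda^{-1}e^{-s}w(s)$ grows like $e^{-(1-\alpha)s}$ as $s\to-\infty$, so its variation depends on the window, and the argument is not literally the same as for the homogeneous kernel. You therefore need to fix the window first and only then choose $h$. You also pick up a factor $T$ from integrating the $t$-uniform bound, which is exactly what the paper handles with its $\varepsilon/(2T)$ choice.
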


\noindent\textit{Proof.} See Appendix~\ref{app:soe-proof}.

\section{Method}
\label{sec:method}

We now turn the continuous-time fractional memory picture of Section~\ref{sec:background} into the discrete selective recurrence underlying the \textsc{Frac} layer, our selective fractional state-space module. The construction has three steps. First, we approximate the target long-memory kernel by a finite bank of exponential modes. Second, we discretize this mode bank exactly under a zero-order-hold (ZOH) assumption. Third, we make the resulting transition selective through token-dependent control variables and mode-wise read and write weights.

\subsection{From finite SoE kernels to a memory bank}
\label{subsec:soe-to-bank}

Consider the shared-bank SoE approximation from Theorem~\ref{thm:soe-approx}
with \(M\) memory modes. We now show that, on the horizon of interest, the
fractional dynamical system is approximated by a bank of \(M\) first-order ODEs.

\begin{proposition}
\label{prop:soe-realization}
Fix \(0<T<\infty\) and $\varepsilon>0$, and let
\(\{\tau_m,c_m(\alpha,\lambda),d_m(\alpha,\lambda)\}_{m=1}^M\)
be the common-bank coefficients given by Theorem~\ref{thm:soe-approx} on \([0,T]\) for this tolerance \(\varepsilon\).
Given \(u\in L^\infty([0,T])\), denote by \(h\) the solution on \([0,T]\) of
the fractional differential equation~\eqref{eq:fractional-relaxation} with input \(u\).

Consider the system of ODEs
\begin{equation}
\dot s_m(t)
=
-\frac{1}{\tau_m}s_m(t)
+
a_m(\alpha,\lambda)\,u(t),
\qquad m=1,\dots,M,
\label{eq:memory-bank-ode}
\end{equation}
with initial conditions $s_m(0)=h(0)$ for $m=1,\dots,M.$ 
If
\begin{equation}
a_m(\alpha,\lambda):=\frac{d_m(\alpha,\lambda)}{c_m(\alpha,\lambda)} = \frac{1}{\lambda\tau_m},
\qquad m=1,\dots,M,
\label{eq:write-coefficient}
\end{equation}
and if the bank readout is defined by
\begin{equation}
\tilde h_M(t):=\sum_{m=1}^{M} c_m(\alpha,\lambda)\,s_m(t),
\label{eq:memory-bank-readout}
\end{equation}
then
\begin{equation}
\sup_{t\in[0,T]}
|h(t)-\tilde h_M(t)|
\le
\varepsilon\Bigl(|h(0)|+\|u\|_{L^\infty([0,T])}\Bigr).
\label{eq:soe-realization-error}
\end{equation}
In particular, \(\tilde h_M\) approximates \(h\) uniformly on \([0,T]\).
\end{proposition}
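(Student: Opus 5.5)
We will solve the linear ODE bank \eqref{eq:memory-bank-ode} explicitly by variation of constants and then compare term by term with the solution formula \eqref{eq:frac-full-solution}. For each $m$,
\[
s_m(t)=e^{-t/\tau_m}h(0)+a_m(\alpha,\lambda)\int_0^t e^{-(t-\xi)/\tau_m}u(\xi)\,d\xi .
\]
Substituting into the readout \eqref{eq:memory-bank-readout} and using $c_m a_m=d_m$ from \eqref{eq:write-coefficient} (this is exactly why the ratio condition in Theorem~\ref{thm:soe-approx} is imposed) gives
\[
\tilde h_M(t)=K_M(t)\,h(0)+\int_0^t G_M(t-\xi)\,u(\xi)\,d\xi,
\]
where $K_M(t):=\sum_m c_m e^{-t/\tau_m}$ and $G_M(t):=\sum_m d_m e^{-t/\tau_m}$ are precisely the two SoE approximants appearing in \eqref{eq:shared-soe-homogeneous} and \eqref{eq:shared-soe-forced}.

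Subtracting this from \eqref{eq:frac-full-solution}, the error splits into a homogeneous part and a forced part:
\[
h(t)-\tilde h_M(t)=\bigl(E_\alpha(-\lambda t^\alpha)-K_M(t)\bigr)h(0)+\int_0^t\bigl(g_{\alpha,\lambda}(t-\xi)-G_M(t-\xi)\bigr)u(\xi)\,d\xi .
\]
For $t\in[0,T]$, the homogeneous term is bounded by $\varepsilon|h(0)|$ directly from \eqref{eq:shared-soe-homogeneous}. For the forced term, we pull out $\|u\|_{L^\infty([0,T])}$ and change variables $\sigma=t-\xi$. This bounds the integral by $\|u\|_\infty\int_0^t|g_{\alpha,\lambda}(\sigma)-G_M(\sigma)|\,d\sigma$. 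Since $t\le T$ and the integrand is nonnegative, this is at most the full integral over $[0,T]$, which is $\le\varepsilon$ by \eqref{eq:shared-soe-forced}. The $L^1$ form of the kernel estimate is what makes this Young-type bound work, despite the integrable singularity $t^{\alpha-1}$ of $g_{\alpha,\lambda}$ at $0$. Adding the two bounds and taking the supremum over $t$ gives \eqref{eq:soe-realization-error}.

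The main point requiring care is justifying \eqref{eq:frac-full-solution} for $u\in L^\infty$, since $h$ is only defined as a solution of \eqref{eq:fractional-relaxation} in a suitable weak or integral sense. We will take \eqref{eq:frac-full-solution} as the definition of the (mild) solution, as the paper does. The convolution is well defined because $g_{\alpha,\lambda}\in L^1([0,T])$: it is continuous on $(0,T]$ and $O(t^{\alpha-1})$ near $0$. Everything else is a routine linear-ODE computation.
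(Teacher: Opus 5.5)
Your proposal is correct and follows essentially the same route as the paper. The shared steps are the variation-of-constants formula for each mode, the identity $c_m a_m = d_m$ that turns the readout into the two SoE approximants, the sup-norm bound on the homogeneous part, and the $L^\infty$--$L^1$ bound on the forced part (after substituting $r=t-\xi$ and extending the integral to $[0,T]$); your added remark that \eqref{eq:frac-full-solution} is to be read as the mild-solution formula, well defined because $g_{\alpha,\lambda}\in L^1([0,T])$, is a reasonable clarification that the paper leaves implicit.
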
 

For the proof see Appendix~\ref{app:soe-realization-proof}. This gives a finite continuous-time realization of the target long-memory kernel. Next,  we discretize this mode bank and turn it into the selective recurrent transition.

\subsection{Frac State Transition}
\label{subsec:frac-transition}
We now convert the finite continuous-time mode bank of Proposition~\ref{prop:soe-realization} into a token-level recurrent transition by discretizing the mode dynamics under the standard zero-order-hold (ZOH)~\citep{gu2024mamba} procedure. On each interval, the control variables are frozen and the input is held constant, so each mode evolves as a scalar first-order linear system. Let $u_t$ denote the held input and let $\Delta_t>0$ denote the interval length. Applying the exact ZOH discretization to \eqref{eq:memory-bank-ode} on the interval $[0,\Delta_t)$, one gets:
\begin{equation}
s_{t,m}
=
\rho_{t,m} s_{t-1,m} + \beta_{t,m} u_t,
\label{eq:zoh-recurrence}
\end{equation}
where
\begin{equation}
\rho_{t,m} = \exp(-\Delta_t / \tau_m),
\qquad
\beta_{t,m} = \frac{1-\rho_{t,m}}{\lambda}.
\label{eq:rho-beta-def}
\end{equation}
Thus \(\rho_{t,m}\) determines how much of the previous state is retained over the interval, while \(\beta_{t,m}\) is the exact ZOH injection factor.
To specialize this recurrence to the fractional setting, we use the self-similarity in \(\lambda\) from Remark~\ref{rem:self-similarity}. That remark shows that varying \(\lambda\) rescales the underlying timescale axis by the factor \(\lambda^{-1/\alpha}\). We keep a shared geometric bank of base timescales \(\{\tau_m\}_{m=1}^M\) and implement this effect through the token-dependent effective timescale
\begin{equation}
\tilde{\tau}_{t,m}
=
\frac{\tau_m}{\lambda_t^{1/\alpha_t}}.
\label{eq:tau-eff}
\end{equation}
Substituting \(\tilde{\tau}_{t,m}\) for \(\tau_m\) in \eqref{eq:rho-beta-def} gives
\begin{equation}
\tilde\rho_{t,m}
=
\exp\!\left(-\Delta_t \lambda_t^{1/\alpha_t} / \tau_m\right),
\qquad
\tilde\beta_{t,m}
=
\frac{1-\tilde\rho_{t,m}}{\lambda_t}.
\label{eq:rho-beta-def-frac}
\end{equation}
Thus \(\Delta_t\), \(\alpha_t\), and \(\lambda_t\) determine both the
mode retention factors and the exact ZOH injection factors. The learned
routing introduced next provides additional content-dependent modulation
over this fractional transition.

The recurrence \eqref{eq:zoh-recurrence} gives the nonselective update of the mode bank. We introduce token-dependent write weights \(b_{t,m}\) and read weights \(c_{t,m}\) over the modes. The resulting selective update is:
\begin{align}
s_{t,m}
&=
\tilde{\rho}_{t,m} s_{t-1,m}
+
\kappa_{t,m} u_t,
\label{eq:frac-selective-update}
\\
h_t
&=
\sum_{m=1}^M c_{t,m} s_{t,m},
\label{eq:frac-readout}
\end{align}
where $ \kappa_{t,m} =
\tilde{\beta}_{t,m} b_{t,m}$.
Thus \(\tilde{\rho}_{t,m}\) and \(\tilde{\beta}_{t,m}\) set the transition rule of the mode bank, while \(b_{t,m}\) and \(c_{t,m}\) determine how the input is distributed across modes and how the updated bank is read out.

On the read side, the fractional theory gives the asymptotics \eqref{eq:asymptotics} for the readout coefficients \(c_m\). We therefore use the log-timescale prior \(-\alpha_t \log \tau_m\) and define
\begin{equation}
c_{t,m}
=
\operatorname{softmax}_m\!\left(
-\alpha_t \log \tau_m + g_{\mathrm{read}}(u_t)_m
\right),
\label{eq:read-softmax}
\end{equation}
where \(g_{\mathrm{read}}\) is a learned linear map from the content signal to mode-wise residual logits.

The write prior is not uniquely fixed by the theory. In the implementation used in this paper, however, we define \(b_{t,m}\) using the same functional form as in \eqref{eq:read-softmax}, but with an independent linear map \(g_{\mathrm{write}}\). This gives the read and write pathways a common log-timescale addressing scheme over the mode bank, which empirically leads to better alignment between where information is written and how it is later retrieved. 
See Algorithm~\ref{alg:frac-state-transition} in Appendix~\ref{app:frac_transition_algo} for the details.

\begin{remark}
The transition \eqref{eq:frac-selective-update} remains a first-order affine recurrence. For a fixed head and feature coordinate, stacking the \(M\) mode states into a vector \(\mathbf{s}_t\in\mathbb{R}^M\) gives
\begin{equation}
\mathbf{s}_t=\mathbf{\Lambda}_t\mathbf{s}_{t-1}+\mathbf{b}_t,
\qquad
\mathbf{\Lambda}_t=\mathrm{Diag}(\tilde\rho_{t,1},\dots,\tilde\rho_{t,M}),
\end{equation}
where \(\mathbf{b}_t\) collects the write terms \(\kappa_{t,m}u_t\). Because this is an affine recurrence, 
training and prefill can be implemented with a chunked parallel scan, following the standard scan composition used in modern selective state-space models~\citep{Blelloch1990,gu2024mamba}. Autoregressive decoding uses the same rule in one-step recurrent form with cached states. We implement this computation with a custom Triton kernel.\footnote{See Appendix~\ref{app:FracMixer Efficient Implementation} for additional implementation details and latency benchmarking.}
\end{remark}

\subsection{The \textsc{Frac} layer}
\label{subsec:frac-block}

\begin{wrapfigure}{r}{0.55\linewidth}
\vspace{-1.2em}
\centering
\includegraphics[width=\linewidth]{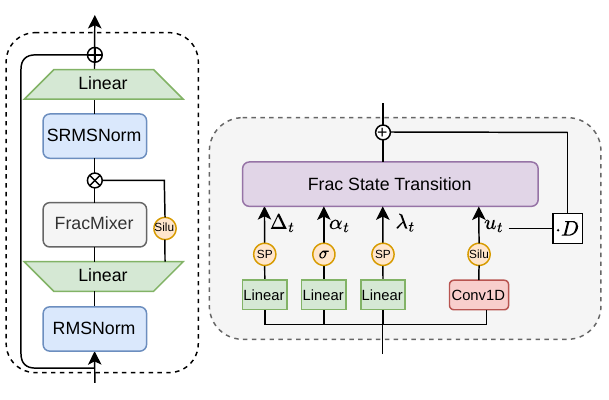}
\caption{Overview of the \fracssm{} layer (left), and \fracssm{Mixer} block (right).}
\label{fig:frac-block}
\vspace{-1.0em}
\end{wrapfigure}

Figure~\ref{fig:frac-block} (left) summarizes the design of \textsc{Frac} layer, which follows the common practice of recent linear-time sequence models~\citep{mamba2,yang2025gdn}. Starting from the hidden state, we first apply normalization, then an input projection, and pass the result to the FracMixer block. This returns the updated hidden representation, which is then processed by gated normalization and a final output projection. 

FracMixer is shown in Figure~\ref{fig:frac-block} (right). Following the common design pattern of recent linear models~\citep{mamba2,yang2025gdn}, the input is split into a pre-convolution (control) branch and a post-convolution (content) branch. The control branch produces the token-wise variables \(\Delta_t\), \(\alpha_t\), and \(\lambda_t\). As in Mamba~\citep{gu2024mamba}, \(\Delta_t\) is obtained from a linear projection with learned bias followed by a softplus (SP) transform, 
while \(\alpha_t\) and \(\lambda_t\) are produced by separate linear projections followed by pointwise nonlinearities: sigmoid for $\alpha$ and SP for $\lambda$
enforcing \(0<\alpha_t<1\) and  \(\lambda_t>0\). The content branch produces \(u_t\) by applying a local depthwise causal 1D convolution along the sequence. The Frac state transition module then applies the selective fractional recurrence across the sequence, as described in Section~\ref{subsec:frac-transition}. Its output is combined with the direct feedthrough term \(D u_t\), where \(D\) is a learnable parameter, mirroring the standard direct feedthrough term in classical state-space models~\citep{gu2022s4}.

\section{Experiments}
\label{sec:experiments}

\subsection{Synthetic Benchmarks}

\paragraph{Heavy Tail Probing}
As a controlled probe of the memory law, we introduce a simple synthetic extrapolation task where sparse events must be accumulated with a fixed power-law decay over distance. We compare the \fracssm{Mixer} block with its counterparts from prior linear-modeling work, including \mambatwo{}~\citep{mamba2}, \gdn{}~\citep{yang2025gdn}, and \mambathree{}~\citep{lahoti2026mamba3}, as well as with vanilla self-attention~\citep{vaswani2017attention}.

All models use a single layer with 200K parameters, are trained only on sequences of length 512, and are evaluated on sequences up to 128K tokens.\footnote{Models and additional implementation details are presented in Appendix~\ref{app:heavy-tail-synth}.} Figure~\ref{fig:synthetic_longtail} shows that while all models degrade as the test context length increases, \fracssm{} consistently exhibits the smallest performance decay. While \gdn{} and \mambathree{} remain the closest competitors to our model, Attention rapidly drops to near-random performance starting at $8$K.
These results suggest that changing the memory law itself can lead to substantially better length generalization.

\begin{wrapfigure}[24]{r}{0.56\linewidth}
\vspace{-4mm}
\centering

\includegraphics[width=\linewidth]{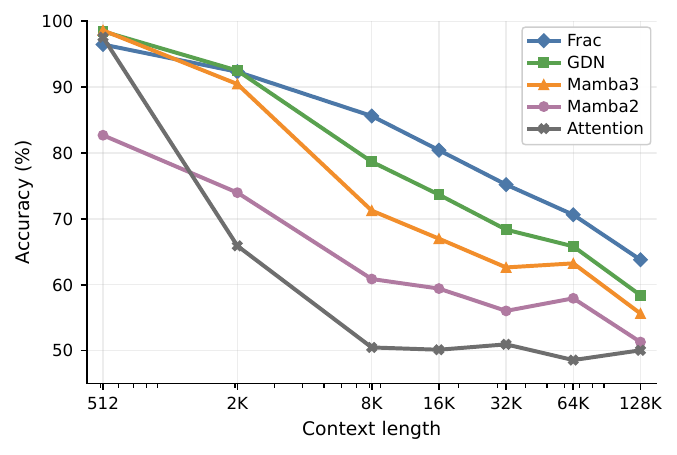}
\caption{Model performance on heavy-tail synthetic extrapolation task.}
\label{fig:synthetic_longtail}

\vspace{3mm}

\small
\setlength{\tabcolsep}{3.2pt}
\renewcommand{\arraystretch}{0.95}
\resizebox{\linewidth}{!}{
\begin{tabular}{lcccc|c}
\toprule
Model       & Comp. & F-ICR & Mem.   & SC    & Avg. \\
\midrule
\mambatwo{}      & $88.3{\scriptscriptstyle \pm 0.2}$ & $7.2{\scriptscriptstyle \pm 0.3}$  & $99.9{\scriptscriptstyle \pm 0.1}$ & $99.9{\scriptscriptstyle \pm 0.1}$ & 73.8 \\
\gdn{}           & $88.9{\scriptscriptstyle \pm 0.1}$ & $9.7{\scriptscriptstyle \pm 0.2}$  & $98.8{\scriptscriptstyle \pm 0.1}$ & $99.9{\scriptscriptstyle \pm 0.1}$ & 74.3 \\
\mambathree{}    & $89.1{\scriptscriptstyle \pm 0.2}$ & $10.1{\scriptscriptstyle \pm 0.4}$ & \bf $100.0{\scriptscriptstyle \pm 0.0}$ & \bf $100.0{\scriptscriptstyle \pm 0.0}$ & 74.8 \\
\fracssm{}        & \bf $90.4{\scriptscriptstyle \pm 0.2}$ & \bf $11.5{\scriptscriptstyle \pm 0.2}$ & $99.9{\scriptscriptstyle \pm 0.1}$ & $99.9{\scriptscriptstyle \pm 0.1}$ & 75.4 \\
\bottomrule
\end{tabular}
}

\captionof{table}{Performance (accuracy) on the synthetic MADLab benchmark over 5 runs.}
\label{tab:madlab_main}
\vspace{-10mm}
\end{wrapfigure}
\paragraph{MADLab} We next evaluate on MADLab~\citep{poli24_madlab}, a suite of synthetic tasks that probe sequence-modeling mechanisms including compression (Comp.), fuzzy in-context recall (F-ICR), memorization (Mem.), and selective copying (SC). All models are trained from scratch and have four layers with roughly 500K parameters, alternating sequence-mixing layers and SwiGLU channel-mixing layers.\footnote{MADLab also includes plain ICR and noisy ICR tasks, which we do not report because performance saturates for all models. See Appendix~\ref{app:madlab} for additional implementation details.
} As shown in Table~\ref{tab:madlab_main}, \fracssm{} is on par with the strongest linear baselines and slightly improves the overall average. These results suggest that replacing the standard ODE-based memory law with an FDE-based one preserves the core mechanistic abilities of SSMs.

\subsection{Language Modeling}

\paragraph{Setup} We compare \fracssm{} with state-of-the-art models, including the linear models \mambatwo{}~\cite{mamba2}, 
\gdn{}~\citep{yang2025gdn}, and \mambathree{}~\cite{lahoti2026mamba3} (both \textit{-SISO} and \textit{-MIMO} variants), as well as Vanilla Transformer~\cite{yang2025qwen3}. Following~\citep{yang2025gdn}, we pretrain 1.3B-parameter LLMs from scratch on 100B tokens sampled from the deduplicated FineWeb-Edu~\cite{penedo2024the,benallal2024smollmcorpus} corpus, training all models under the same standard protocol. We use the Llama-2~\citep{arxiv23_llama2} tokenizer with a 32k-token vocabulary and perform training on fully packed sequences with a length of 4k. Following prior work~\citep{mamba2,yang2024delta,lahoti2026mamba3}, we evaluate models on long-context needle-in-a-haystack tasks~\citep{needle} (\textbf{NIAH}) and \textbf{LongBench}~\citep{bai2024longbench}, as well as short-context language modeling benchmarks (\textbf{LM Harness}) and real-world intensive recall-retrieval tasks~\citep{arora-2024-jrt} (\textbf{Recall-Retrieval}). A detailed description of the training, implementation, and evaluation protocols is provided in Appendix~\ref{app:Language Modeling}.

\noindent \textbf{NIAH} 
Unlike prior works, we evaluate NIAH far beyond the $4$K training sequence length, testing extrapolation up to $64$K tokens. Figure~\ref{fig:niah_main} shows that \fracssm{} performs on par with baseline linear models on short-context settings ($\leq 4$K), while demonstrating substantially stronger length generalization once the context exceeds the training range. In particular, as the sequence length increases from $8$K to $64$K, the performance of \fracssm{} drops more slowly than that of the baseline models. The main exception is GDN on \textit{S-NIAH-1}, where its gated delta rule is especially effective at filtering repetitive context~\citep{yang2025gdn}.

\begin{figure*}[!tph]
    \centering
    \includegraphics[width=\linewidth]{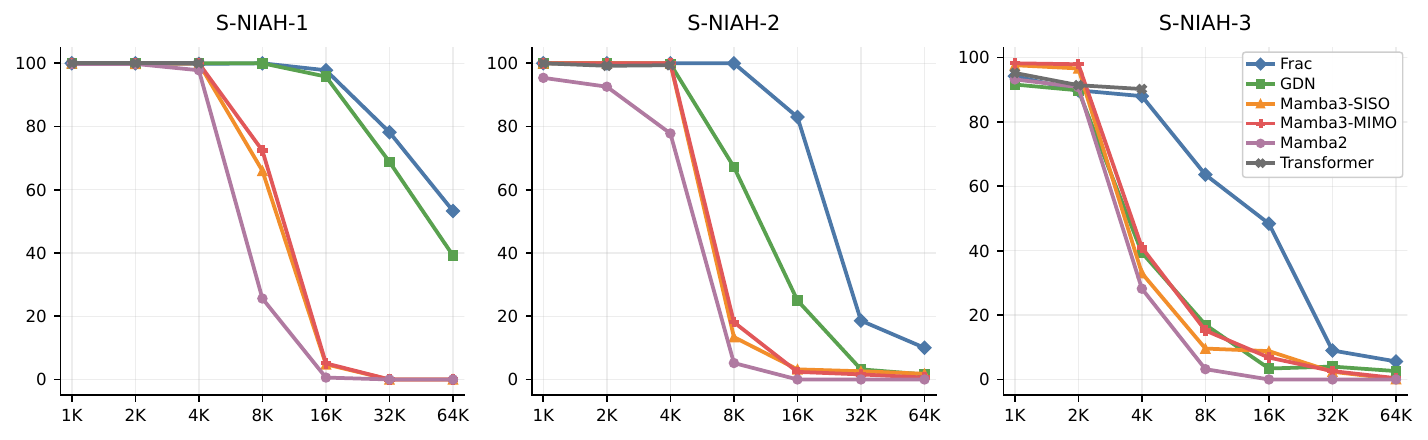}
    \caption{Model accuracies on three variants of the \textit{needle-in-a-haystack} passkey retrieval task, with sequence lengths ranging from $1$K to $64$K. Transformer performance is zero after $4$K.}
    \label{fig:niah_main}
\end{figure*}

\noindent\textbf{LongBench.}
Table~\ref{tab:longbench_main} shows that \fracssm{} improves long-context language processing tasks over both the Transformer and linear-model baselines. In particular, \fracssm{} outperforms the strongest linear baseline \gdn{} by 1.9\% on average, and reports the best result on 8/14 tasks. The gains are smaller than on NIAH, which is expected because more than half of LongBench tasks have average input lengths below $8$K tokens, so the Transformer remains competitive with linear recurrent models. Nevertheless, these results suggest that the fractional memory mechanism retains useful information over longer spans more effectively than standard exponential-decay recurrent models.

\begin{table*}[!t]
\centering
\resizebox{\linewidth}{!}{
\setlength{\tabcolsep}{4.0pt}
\scriptsize
\begin{tabular}{l|ccc|ccc|ccc|ccc|cc|c}
\toprule
& \multicolumn{3}{c|}{Single-Doc QA}
& \multicolumn{3}{c|}{Multi-Doc QA}
& \multicolumn{3}{c|}{Summarization}
& \multicolumn{3}{c|}{Few-shot}
& \multicolumn{2}{c|}{Code}
& \multirow{2}{*}{Avg.}\\
Model
& { NQA} & { QQA} & { MFQ}
& {HQA} & { 2WM} & { Mus}
& {GvR} & { QMS} & { MNs}
& {TRC} & { TQA} & { SSM}
& {LCC} & { RBP}
& \\
\midrule
Transformer & 8.7 & 10.6 & 19.0 & 6.1 & \underline{15.5} & 4.0 & 7.2 & 16.6 & 11.2 & 25.0 & 19.1 & 11.3 & \textbf{26.0} & \underline{25.6} & 14.7 \\
\mambatwo{} & 4.4 & 11.4 & 16.3 & 4.5 & 13.8 & 2.7 & 7.8 & 12.7 & 6.3 & 17.0 & 10.9 & 10.5 & \underline{23.4} & \textbf{26.5} & 12.0 \\
\gdn{} & \underline{11.5} & 10.1 & \underline{24.4} & 7.3 & 14.8 & 4.8 & 7.9 & \underline{18.9} & \underline{14.3} & \textbf{40.0} & \underline{21.6} & 13.2 & 18.7 & 16.9 & \underline{16.0} \\
\mambathree{\tiny{\textit{-SISO}}} & 11.2 & \underline{12.4} & 22.6 & \underline{8.4} & \textbf{16.3} & \textbf{6.4} & 9.8 & \textbf{19.3} & 9.6 & 13.5 & 18.2 & \underline{14.5} & 20.6 & 18.3 & 14.4 \\
\mambathree{\tiny{\textit{-MIMO}}} & 5.0 & 10.4 & 15.3 & 8.2 & 12.5 & \underline{5.6} & \underline{10.5} & 14.3 & 8.4 & 8.5 & 8.6 & 11.6 & 20.7 & 19.9 & 11.4 \\
\fracssm{} & \textbf{13.2} & \textbf{14.0} & \textbf{25.8} & \textbf{8.5} & 12.2 & 5.1 & \textbf{13.8} & \underline{18.9} & \textbf{16.3} & \underline{39.0} & \textbf{24.5} & \textbf{28.2} & 11.0 & 19.5 & \textbf{17.9} \\
\bottomrule
\end{tabular}
}
\caption{Model performance on 14 LongBench tasks grouped under 5 categories. The highest and second-highest scores are highlighted in bold and underline, respectively.}
\label{tab:longbench_main}
\end{table*}

\begin{table*}[!th]
\centering
\resizebox{\linewidth}{!}{
\setlength{\tabcolsep}{3.2pt}
\scriptsize
\begin{tabular}{l|cc|cccccccc|c}
\toprule
\textbf{Model}
& \textbf{Wiki.}
& \textbf{LMB.}
& \textbf{LMB.}
& \textbf{PIQA}
& \textbf{Hella.}
& \textbf{Wino.}
& \textbf{ARC-e}
& \textbf{ARC-c}
& \textbf{OBQA}
& \textbf{BoolQ}
& \textbf{Avg.} \\
& ppl $\downarrow$
& ppl $\downarrow$
& acc $\uparrow$
& acc $\uparrow$
& acc\_n $\uparrow$
& acc $\uparrow$
& acc $\uparrow$
& acc\_n $\uparrow$
& acc $\uparrow$
& acc $\uparrow$
& \\
\midrule
Transformer & \textbf{15.3} & 13.6 & \underline{45.6} & 72.1 & \underline{57.6} & \textbf{59.5} & 70.7 & 35.8 & 39.4 & \textbf{61.9} & \textbf{55.3} \\
\mambatwo{} & 16.7 & \underline{13.5} & 40.9 & 72.1 & 57.1 & 54.0 & 71.2 & \underline{38.3} & 40.3 & 56.2 & 53.8 \\
\gdn{} & 16.4 & 13.6 & 42.8 & 71.8 & 54.7 & \underline{57.3} & 70.5 & 37.5 & 39.3 & 59.9 & 54.2 \\
\mambathree{\tiny{\textit{-SISO}}} & 16.1 & 13.7 & 45.2 & \textbf{72.8} & 57.2 & 55.6 & \underline{72.3} & 36.3 & \underline{40.6} & 57.0 & 54.6 \\
\mambathree{\tiny{\textit{-MIMO}}} & \underline{15.9} & \textbf{12.3} & \textbf{47.3} & \textbf{72.8} & \textbf{57.9} & 55.9 & \textbf{73.2} & \textbf{39.2} & 38.6 & 57.5 & \bf 55.3 \\
\fracssm{} & 16.5 & \underline{13.5} & 44.9 & \underline{72.4} & 56.4 & 55.6 & 71.6 & 38.0 & \textbf{41.8} & \underline{60.0} & \underline{55.1} \\
\bottomrule
\end{tabular}
}
\caption{Model performance on short-context language modeling and understanding tasks. The best and second-best scores for each metric are highlighted in bold and underline, respectively. The average is computed by excluding the first two columns.}
\label{tab:lm_harness_results}
\vspace{-6mm}
\end{table*}

\paragraph{LM Harness}
On short-context language understanding and commonsense reasoning tasks, \fracssm{} achieves performance that is competitive with other linear-time baselines and the Transformer. As shown in Table~\ref{tab:lm_harness_results}, \fracssm{} is only 0.2\% behind the Transformer and \mambathree{\tiny{\textit{-MIMO}}} on average, while achieving perplexity comparable to the other models. Since these tasks typically involve short sequences of roughly $32$--$256$ tokens, the results indicate that \fracssm{} preserves short-context language understanding abilities despite its structural bias toward modeling heavy-tailed long-memory.
\begin{wraptable}[12]{r}{0.6\linewidth}
\vspace{-2mm}
\centering
\setlength{\tabcolsep}{3.2pt}
\renewcommand{\arraystretch}{0.92}
\scriptsize
\resizebox{\linewidth}{!}{
\begin{tabular}{l|cccccc|c}
\toprule
Model
& {\tiny SWDE} & {\tiny SQD} & {\tiny FDA} & {\tiny TQA} & {\tiny NQ} & {\tiny Drop}
& {\tiny Avg.} \\
\midrule
Transformer
& \underline{42.8} & 39.6 & \underline{27.4} & \textbf{65.7} & \textbf{27.3} & \textbf{28.3} & \textbf{38.5} \\
\mambatwo{}
& 32.8 & 31.0 & 13.6 & 60.5 & 20.4 & 20.8 & 29.9 \\
\gdn{}
& 32.7 & 35.1 & 12.2 & 58.4 & 22.1 & 24.6 & 30.9 \\
\mambathree{\tiny{\textit{-SISO}}}
& 32.2 & \underline{39.8} & 26.7 & 61.5 & 24.8 & 25.7 & 35.1 \\
\mambathree{\tiny{\textit{-MIMO}}}
& 34.5 & \textbf{41.2} & \textbf{28.9} & \underline{64.2} & 25.8 & 26.1 & \underline{36.8} \\
\fracssm{}
& \textbf{43.2} & 39.0 & 20.0 & 64.0 & \underline{25.9} & \underline{26.2} & 36.4 \\
\bottomrule
\end{tabular}
}
\caption{Model performance on 6 real-world recall-retrieval tasks. The highest and second-highest scores are highlighted in bold and underline, respectively. }
\label{tab:retrieval_results}

\end{wraptable}

\vspace{-8mm}
\noindent \paragraph{Recall-Retrieval}
A similar trend is observed on real-world recall- and retrieval-intensive tasks~\citep{arora-2024-jrt}, where \fracssm{} remains competitive with prior models. It ranks third overall, trailing the Transformer by 2.1\% on average and the second best \mambathree{\small{\textit{-MIMO}}} by only 0.4\%. It is worth noting that although the original tasks were designed to be challenging at very long sequences, we follow prior work~\citep{mamba2,lahoti2026mamba3,yang2025gdn} and evaluate on truncated 2K-token sequences, making them primarily short-context retrieval tasks in this setting. Nevertheless, \fracssm{} retains strong short-context retrieval ability despite being structurally designed for long-context modeling.

\subsection{DNA modeling}
\label{subsec:dna}

\begin{wrapfigure}[7]{r}{0.33\linewidth}
    \vspace{-6mm}
    \centering
    \includegraphics[height=2.9cm]{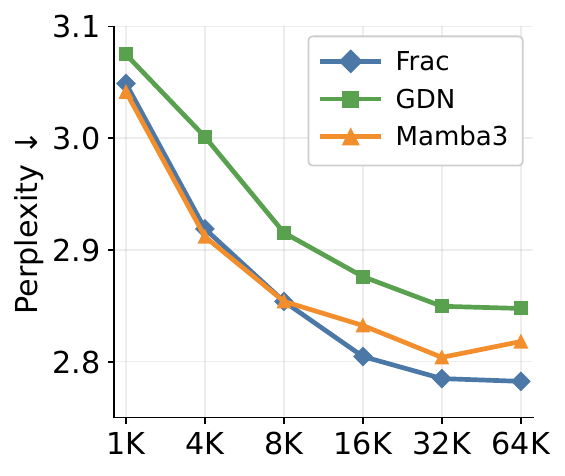}
    \vspace{-1.5mm}
    \caption{DNA perplexity vs. training sequence length.}
    \label{fig:dna}
    \vspace{-3mm}
\end{wrapfigure}

We evaluate \fracssm{} on genomic sequence modeling. Following HyenaDNA~\citep{nguyen2023hyenadna}, we train 7M-parameter causal language models on the human genome HG38 dataset across sequence lengths ranging from $1$K to $64$K, testing the scaling law.  Figure~\ref{fig:dna} shows that \fracssm{} achieves on par or lower perplexity than \mambathree{} and \gdn{} across all lengths, with larger gains at longer contexts. Full details are presented in Appendix~\ref{app:dna}.

\vspace{-2.5mm}
\section{Related Work}
\label{sec:related-work}
\paragraph{State-space sequence models.}
State-space models provide a principled route to efficient sequence modeling by viewing sequence layers as discretizations of continuous-time dynamical systems. Structured SSMs such as S4~\citep{gu2022s4} showed that parameterized linear ODEs can yield long-range sequence models with efficient convolutional or recurrent implementations, with subsequent work simplifying or extending this view through diagonal parameterizations~\citep{gu2022s4d} and input-dependent dynamics~\citep{hasani2023liquid}. Our work follows this continuous-time perspective, but changes the underlying memory law: rather than starting from an ODE with exponential memory, \fracssm{} starts from fractional dynamics and realizes the heavy-tail memory through a finite sum-of-exponentials approximation.

\paragraph{Selective SSMs and Mamba.}
Recent SSM language models improve discrete sequence modeling through input selectivity. Mamba~\citep{gu2024mamba} makes the transition and input/output projections token-dependent. \mambatwo{}~\citep{mamba2} develops the state-space duality view, connecting selective SSMs to semiseparable attention-like operators and enabling faster hardware-efficient implementations. Mamba-3~\citep{lahoti2026mamba3} further improves this line through a more expressive discretization, complex-valued state dynamics, and a multi-input multi-output (MIMO) recurrence. \fracssm{} explores a different axis: it retains the efficient selective recurrent structure but changes the memory kernel being discretized. In particular, the multiple modes in \fracssm{} should not be confused with Mamba-3's MIMO design. The modes in \fracssm{} arise as quadrature modes over timescales in a finite sum-of-exponentials approximation to a fractional kernel, rather than as a MIMO parameterization of the recurrent map.

\paragraph{Linear attention, associative updates, and multiple memories.}
Several efficient sequence models approach bounded-state recurrence from the linear-attention side.
RetNet~\citep{sun2023retnet} uses fixed decay factors across retention heads to cover different memory timescales, with each head using a single exponential decay.
Gated Linear Attention~\citep{yang2024gla} uses token-dependent forgetting, while delta-rule models~\citep{Schlag2021delta,yang2024delta} improve associative memory updates. Gated DeltaNet~\citep{yang2025gdn} combines gating with delta-rule updates to improve retrieval, length extrapolation, and long-context understanding. Kimi Delta Attention~\citep{Zhang2025KimiLA} further develops this direction with finer-grained gating in a hybrid architecture. Mixture-of-Memories~\citep{du2026mom} instead increases effective memory capacity by routing tokens across multiple independent memory states.
These approaches improve different aspects of bounded-state memory, including retention, updating, and capacity. \fracssm{} instead derives the temporal decay law by introducing a heavy-tailed fractional memory kernel, making it largely complementary to the update- and capacity-oriented mechanisms mentioned above.

\paragraph{Fractional dynamics in machine learning.} 
Fractional theory has found meaningful use in several areas of machine learning. 
Recent works have explored neural fractional differential equation models~\citep{Coelho2024NeuralFD} and scalable training methods for them~\citep{kang2025efficient}.
The work most closely related to ours is FADE~\citep{kang2025fade}, a fractional-attention differential-equation framework that solves a history-dependent neural integral equation over sampled past states using an iterative solver. Unlike \fracssm{}, FADE does not provide a fixed-size recurrent state or a decoder language-model architecture.
Fractional-order spiking neural networks apply the fractional formalism to enrich artificial neuron dynamics~\citep{ge2026fractionalorder}. Fractional stochastic dynamics has been used in generative modeling, including fractional diffusion~\citep{nobis2024generative} and a protein-generation extension~\citep{liang2025protgfdm}.
Our work instead approximates the fractional memory kernel with a finite SoE bank, giving a fixed-size recurrent state compatible with efficient scan computation and autoregressive decoding.

\section{Conclusion}

We introduced \fracssm{}, a selective SSM architecture that turns fractional long-memory dynamics into an efficient finite-state recurrent layer. Experiments show that \fracssm{} improves long-context performance over strong SSM baselines while remaining competitive on short-context benchmarks. Future work could combine fractional long-memory transitions with delta-rule style update mechanisms to obtain both stronger long-range inductive bias and more precise associative retrieval.

\bibliographystyle{plainnat}  
\bibliography{custom}


\appendix
\clearpage

\section{Discussion of Theorem~\ref{thm:diffusive-representation}}
\label{app:diffusive-proof}

The statement of this theorem is a standard, yet non-trivial, result in complex analysis. The exact reference can be found in the comprehensive book of \cite[Appendix E]{Mainardi2010}. Specifically, the construction of densities in diffusive representation is given there. First, for
$0<\alpha<1$ and $\lambda > 0$ : 
\begin{equation*}
E_{\alpha}(-\lambda t^{\alpha})
=
\int_{0}^{\infty} e^{-rt}\,K_{\alpha}(r;\lambda)\,dr,
\qquad t\ge 0,
\end{equation*}
where
\begin{equation*}
K_{\alpha}(r;\lambda)
=
\frac{1}{\pi}\,
\frac{\lambda\, r^{\alpha-1}\sin(\pi\alpha)}
{r^{2\alpha}+2\lambda r^{\alpha}\cos(\pi\alpha)+\lambda^{2}},
\qquad r>0.
\end{equation*}
Also for  $0<\alpha\le \beta<1$ and $\lambda > 0$:
\begin{equation*}
t^{\beta-1}E_{\alpha,\beta}(-\lambda t^{\alpha})
=
\int_{0}^{\infty} e^{-rt}\,K_{\alpha,\beta}(r;\lambda)\,dr,
\qquad t > 0,
\end{equation*}
where
\begin{equation*}
K_{\alpha,\beta}(r;\lambda)
=
\frac{1}{\pi}\,
\frac{
\lambda \sin\!\bigl(\pi(\beta-\alpha)\bigr)
+ r^{\alpha}\sin(\pi\beta)
}
{r^{2\alpha}+2\lambda r^{\alpha}\cos(\pi\alpha)+\lambda^{2}}
\,r^{\alpha-\beta},
\qquad r>0.
\end{equation*}

In both cases, the spectral density is nonnegative.

To get the formulas \eqref{eq:Ea-diffusive}, we need to change the variables $\tau = 1/r$. 
Then 
\begin{equation*}
    E_{\alpha}(-\lambda t^{\alpha}) =
\int_{0}^{\infty}
e^{-t/\tau}\, R_{\alpha,\lambda}(\tau)\, d\tau,
\end{equation*}
where 
\begin{equation}
\label{eq:r-density}
R_{\alpha, \lambda}(\tau) =     \frac{1}{\pi}\,
\frac{ \lambda \tau^{\alpha-1}\sin(\pi\alpha)}
{\lambda^2\tau^{2\alpha}+2\lambda \tau^{\alpha}\cos(\pi\alpha)+1},
\qquad \tau>0.
\end{equation}
Similarly, putting $\beta=\alpha$: 
\begin{equation*}
    g_{\alpha,\lambda}(t) := t^{\alpha-1}
E_{\alpha,\alpha}(-\lambda t^{\alpha}) = \int_{0}^{\infty}
e^{-t/\tau}\, H_{\alpha,\lambda}(\tau)\, d\tau ,
\end{equation*}
where 
\begin{equation}
\label{eq:h-density}
H_{\alpha,\lambda}(\tau) =     \frac{1}{\pi}\,
\frac{
\tau^{\alpha-2}\sin(\pi\alpha)
}
{1+2\lambda\tau^\alpha\cos(\pi\alpha)+\lambda^2\tau^{2\alpha}},
\qquad \tau>0.
\end{equation}

\begin{remark}
From the formulas it is clear that the two diffusive densities are related by the exact identity
\begin{equation}
\label{eq:h-r-relation}
H_{\alpha,\lambda}(\tau)
=
\frac{1}{\lambda\tau}\,R_{\alpha,\lambda}(\tau),
\qquad \tau>0.
\end{equation} 
\end{remark}

\begin{remark}
\label{rem:self-similarity}
It is important for the next steps to note the self-similarity relations in the \(\lambda\)-scaling. In particular, it follows from the explicit formulas for the densities that
\begin{align*}
R_{\alpha,\lambda}(\tau)
&=
\lambda^{1/\alpha}\,
R_{\alpha,1}\!\bigl(\lambda^{1/\alpha}\tau\bigr),
\\
H_{\alpha,\lambda}(\tau)
&=
\lambda^{2/\alpha-1}\,
H_{\alpha,1}\!\bigl(\lambda^{1/\alpha}\tau\bigr),
\end{align*}
so varying \(\lambda\) rescales the underlying timescale axis by the factor \(\lambda^{-1/\alpha}\).
\end{remark}

\section{Proof of Theorem~\ref{thm:soe-approx} }
\label{app:soe-proof}

\begin{proof}
We will use the diffusive representations \eqref{eq:Ea-diffusive} proved in
Appendix~\ref{app:diffusive-proof}:
\[
E_\alpha(-\lambda t^\alpha)
=
\int_0^\infty e^{-t/\tau}\,R_{\alpha,\lambda}(\tau)\,d\tau,
\qquad
g_{\alpha,\lambda}(t)
=
\int_0^\infty e^{-t/\tau}\,H_{\alpha,\lambda}(\tau)\,d\tau,
\]
where the densities are given in \eqref{eq:r-density} and \eqref{eq:h-density}.

Both densities are strictly positive on \((0,\infty)\). Setting $t=0$ for $E_{\alpha}(-\lambda t^{\alpha})$ in \eqref{eq:Ea-diffusive}, we get: 
\[
\int_0^\infty R_{\alpha,\lambda}(\tau)\,d\tau
=
E_\alpha(0)
=
1.
\]
The second equality is easily seen from the power series definition of the function $E_\alpha$ \eqref{eq:Ea}. In particular, \(R_{\alpha,\lambda}\in L^1(0,\infty)\).

Fix \(\varepsilon>0\), and set
$ \varepsilon_0:=\frac{\varepsilon}{2}\min(1,\lambda).
$

As we have shown, $R_{\alpha,\lambda}\in L^1(0,\infty)$, so we may choose
\(0<\tau_-<\tau_+<\infty\) such that
$$
\int_{(0,\tau_-)\cup(\tau_+,\infty)}
R_{\alpha,\lambda}(\tau)\,d\tau
<
\varepsilon_0.
\label{eq:tail-choice}
$$

For \(E_\alpha(-\lambda t^\alpha)\) kernel, 
using the inequality \(e^{-t/\tau}\le 1\):
\begin{equation}
\sup_{t\in[0,T]}
\int_{(0,\tau_-)\cup(\tau_+,\infty)}
e^{-t/\tau}R_{\alpha,\lambda}(\tau)\,d\tau
\le
\int_{(0,\tau_-)\cup(\tau_+,\infty)}
R_{\alpha,\lambda}(\tau)\,d\tau
<
\varepsilon_0
\le
\frac{\varepsilon}{2}.
\label{eq:hom-tail}
\end{equation}

For \(g_{\alpha,\lambda}(t)\) kernel, Fubini's theorem and the identity \eqref{eq:h-r-relation}
yield
\begin{align}
\int_0^T\!\!\int_{(0,\tau_-)\cup(\tau_+,\infty)}
e^{-t/\tau}H_{\alpha,\lambda}(\tau)\,d\tau\,dt
&=
\int_{(0,\tau_-)\cup(\tau_+,\infty)}
\Bigl(\int_0^T e^{-t/\tau}\,dt\Bigr)H_{\alpha,\lambda}(\tau)\,d\tau
\notag\\
&=
\frac{1}{\lambda}
\int_{(0,\tau_-)\cup(\tau_+,\infty)}
\bigl(1-e^{-T/\tau}\bigr)R_{\alpha,\lambda}(\tau)\,d\tau
\notag\\
&\le
\frac{1}{\lambda}
\int_{(0,\tau_-)\cup(\tau_+,\infty)}
R_{\alpha,\lambda}(\tau)\,d\tau
<
\frac{\varepsilon_0}{\lambda}
\le
\frac{\varepsilon}{2}.
\label{eq:forced-tail}
\end{align}

Now let
\[
A=\log\tau_-,
\qquad
B=\log\tau_+.
\]
For a sufficiently large \(M\in\mathbb N\) (to be justified below), let 
\[
h=\frac{B-A}{M},
\qquad
x_m=A+\Bigl(m-\tfrac12\Bigr)h,
\qquad m=1,\dots,M.
\]
Define
\[
\tau_m:=e^{x_m}
=
e^{A+h/2}e^{(m-1)h}
=
\tau_0 q^{m-1},
\]
where
\[
\tau_0:=e^{A+h/2}>0,
\qquad
q:=e^h>1.
\]
Thus \(\{\tau_m\}_{m=1}^M\) is a geometrically spaced bank.
With the change of variables \(\tau=e^x\), \(d\tau=e^x\,dx\), the truncated
integrals in \eqref{eq:Ea-diffusive}  become
\[
\int_{\tau_-}^{\tau_+} e^{-t/\tau}R_{\alpha,\lambda}(\tau)\,d\tau
=
\int_A^B F(t,x)\,dx,
\qquad
\int_{\tau_-}^{\tau_+} e^{-t/\tau}H_{\alpha,\lambda}(\tau)\,d\tau
=
\int_A^B G(t,x)\,dx,
\]
where
\[
F(t,x):=e^{-t e^{-x}}\,e^xR_{\alpha,\lambda}(e^x),
\qquad
G(t,x):=e^{-t e^{-x}}\,e^xH_{\alpha,\lambda}(e^x).
\]
The functions \(F\) and \(G\) are continuous on the compact rectangle
\([0,T]\times[A,B]\), hence uniformly continuous there (by Heine–Cantor Theorem).

Let
\[
\omega_F(\delta)
:=
\sup\bigl\{|F(t,x)-F(t,y)|:\ t\in[0,T],\ x,y\in[A,B],\ |x-y|\le \delta\bigr\},
\]
and define \(\omega_G\) analogously. Then
\(\omega_F(\delta)\to 0\) and \(\omega_G(\delta)\to 0\) as \(\delta \to 0^+ \).
The midpoint-rule estimate therefore gives
\begin{align}
\sup_{t\in[0,T]}
\left|
\int_A^B F(t,x)\,dx
-
h\sum_{m=1}^M F(t,x_m)
\right|
&\le
(B-A)\,\omega_F(h/2),
\label{eq:hom-interior}
\\
\sup_{t\in[0,T]}
\left|
\int_A^B G(t,x)\,dx
-
h\sum_{m=1}^M G(t,x_m)
\right|
&\le
(B-A)\,\omega_G(h/2).
\label{eq:forced-interior}
\end{align}
Choose \(M\) large enough so that
\[
(B-A)\,\omega_F(h/2)\le \frac{\varepsilon}{2},
\qquad
(B-A)\,\omega_G(h/2)\le \frac{\varepsilon}{2T}.
\]

By the definitions of \(F\) and \(G\), the midpoint approximations in
\eqref{eq:hom-interior} and \eqref{eq:forced-interior} take the form:
$$ 
h\sum_{m=1}^M F(t,x_m)
=
h\sum_{m=1}^M e^{-t e^{-x_m}}\,e^{x_m}R_{\alpha,\lambda}(e^{x_m})
=
h\sum_{m=1}^M e^{-t/\tau_m}\,\tau_m R_{\alpha,\lambda}(\tau_m) 
$$

$$
h\sum_{m=1}^M G(t,x_m)
=
h\sum_{m=1}^M e^{-t e^{-x_m}}\,e^{x_m}H_{\alpha,\lambda}(e^{x_m})
=
h\sum_{m=1}^M e^{-t/\tau_m}\,\tau_m H_{\alpha,\lambda}(\tau_m).
$$ 

This motivates the definitions for the coefficients:
\begin{equation}
c_m(\alpha,\lambda):=
h\,\tau_m\,R_{\alpha,\lambda}(\tau_m),
\qquad
d_m(\alpha,\lambda):=
h\,\tau_m\,H_{\alpha,\lambda}(\tau_m),
\qquad
m=1,\dots,M.
\label{eq:cmdm-def}
\end{equation}

Since \(R_{\alpha,\lambda}(\tau)\) and \(H_{\alpha,\lambda}(\tau)\) are positive
for all \(\tau>0\), we have \(c_m(\alpha,\lambda)>0\) and
\(d_m(\alpha,\lambda)>0\). 
Moreover,
\[
\frac{d_m(\alpha,\lambda)}{c_m(\alpha,\lambda)}
=
\frac{H_{\alpha,\lambda}(\tau_m)}{R_{\alpha,\lambda}(\tau_m)}
=
\frac{1}{\lambda\tau_m},
\qquad m=1,\dots,M.
\]

Combining \eqref{eq:hom-tail} and \eqref{eq:hom-interior}, we obtain
\[
\sup_{t\in[0,T]}
\left|
E_\alpha(-\lambda t^\alpha)
-
\sum_{m=1}^M c_m(\alpha,\lambda)e^{-t/\tau_m}
\right|
\le
\frac{\varepsilon}{2}+\frac{\varepsilon}{2}
=
\varepsilon.
\]
Likewise, by \eqref{eq:forced-tail}, \eqref{eq:forced-interior}, and the bound
\[
\int_0^T |f(t)|\,dt \le T\sup_{t\in[0,T]}|f(t)|,
\]
we get
\[
\int_0^T
\left|
g_{\alpha,\lambda}(t)
-
\sum_{m=1}^M d_m(\alpha,\lambda)e^{-t/\tau_m}
\right|dt
\le
\frac{\varepsilon}{2}
+
T\cdot\frac{\varepsilon}{2T}
=
\varepsilon.
\]

Finally, from the explicit formula for \(R_{\alpha,\lambda}\),
\[
R_{\alpha,\lambda}(\tau)
=
\frac{\sin(\pi\alpha)}{\pi\lambda}\,
\tau^{-\alpha-1}\bigl(1+O(\tau^{-\alpha})\bigr),
\qquad \tau\to\infty.
\]
Since \(h=\log q\), it follows from \eqref{eq:cmdm-def} that
\[
c_m(\alpha,\lambda)
=
(\log q)\,\tau_m R_{\alpha,\lambda}(\tau_m)
=
\frac{(\log q)\sin(\pi\alpha)}{\pi\lambda}\,
\tau_m^{-\alpha}\bigl(1+O(\tau_m^{-\alpha})\bigr)
\]
for the large-timescale part of the geometric bank. This completes the proof.
\end{proof}

\begin{remark}
\label{rem:norm-choice}
We use the uniform norm to approximate \(E_\alpha(-\lambda t^\alpha)\), while the approximation of
\(g_{\alpha,\lambda}(t)=t^{\alpha-1}E_{\alpha,\alpha}(-\lambda t^\alpha)\)
is stated in the \(L^1\)-norm. 
This asymmetry is forced by the structure of
the kernels themselves: for \(0<\alpha<1\), \(g_{\alpha,\lambda}\) has an
integrable singularity at \(t=0^+\), whereas any finite sum of exponentials
is bounded there, so no uniform pointwise approximation can hold near zero.
However, \(L^1\)-bound is 
sufficient for our main goal, the proof of Proposition~\ref{prop:soe-realization}.
\end{remark}

\begin{remark}
\label{rem:M-small}
Although Theorem~\ref{thm:soe-approx} is an existence result, the numerical-analysis
literature on sum-of-exponentials approximations for fractional kernels~\citep{JiangZhangZhang2017,Chaudhary2025} shows that
the required number of modes grows only poly-logarithmically with the target
accuracy and time horizon. Our kernels \(E_\alpha(-\lambda t^\alpha)\) and
\(g_{\alpha,\lambda}\) exhibit the same multi-timescale power-law structure, so
these works support the expectation that only a modest number of modes is needed
in our setting as well. In practice, this means that a bank of only a few tens
of modes can already cover many orders of magnitude of timescale at useful accuracy. See Appendix~\ref{app:ablation} for the practical ablation on the number of modes $M$.
\end{remark}

\section{Proof of Proposition~\ref{prop:soe-realization}}
\label{app:soe-realization-proof}

\begin{proof}
For each mode $m = 1, \dots, M$, the ODE
\[
\dot s_m(t)
=
-\frac{1}{\tau_m}s_m(t)
+
a_m(\alpha,\lambda)\,u(t),
\qquad
s_m(0)=h(0),
\]
has the explicit solution
\begin{equation}
s_m(t)
=
e^{-t/\tau_m}h(0)
+
a_m(\alpha,\lambda)\int_0^t e^{-(t-\xi)/\tau_m}u(\xi)\,d\xi.
\label{eq:app-bank-mode-solution}
\end{equation}
Therefore, the readout
\[
\tilde h_M(t)=\sum_{m=1}^M c_m(\alpha,\lambda)s_m(t)
\]
takes the form
\begin{align}
\tilde h_M(t)
&=
h(0)\sum_{m=1}^M c_m(\alpha,\lambda)e^{-t/\tau_m}
\notag\\
&\quad+
\int_0^t
\sum_{m=1}^M
c_m(\alpha,\lambda)a_m(\alpha,\lambda)e^{-(t-\xi)/\tau_m}u(\xi)\,d\xi.
\label{eq:app-bank-readout-expanded}
\end{align}
With the choice
\[
a_m(\alpha,\lambda)=\frac{d_m(\alpha,\lambda)}{c_m(\alpha,\lambda)},
\qquad m=1,\dots,M,
\]
this becomes
\begin{equation}
\tilde h_M(t)
=
h(0)\sum_{m=1}^M c_m(\alpha,\lambda)e^{-t/\tau_m}
+
\int_0^t
\sum_{m=1}^M
d_m(\alpha,\lambda)e^{-(t-\xi)/\tau_m}u(\xi)\,d\xi.
\label{eq:app-shared-bank-solution}
\end{equation}

On the other hand, the full solution $h$ for the fractional differential equation~\eqref{eq:fractional-relaxation}
is given by \eqref{eq:frac-full-solution}:

\begin{equation*}
h(t)
=
E_\alpha(-\lambda t^\alpha)\,h(0)
+
\int_0^t g_{\alpha,\lambda}(t-\xi)u(\xi)\,d\xi.
\end{equation*}

Subtracting \eqref{eq:app-shared-bank-solution} from
\eqref{eq:frac-full-solution}, we obtain
\begin{align}
|h(t)-\tilde h_M(t)|
&\le
|h(0)|
\left|
E_\alpha(-\lambda t^\alpha)
-
\sum_{m=1}^M c_m(\alpha,\lambda)e^{-t/\tau_m}
\right|
\notag\\
&\quad+
\left|
\int_0^t
\Bigl(
g_{\alpha,\lambda}(t-\xi)
-
\sum_{m=1}^M d_m(\alpha,\lambda)e^{-(t-\xi)/\tau_m}
\Bigr)u(\xi)\,d\xi
\right|.
\label{eq:app-error-split}
\end{align}

By \eqref{eq:shared-soe-homogeneous}, the first term is bounded by $ |h(0)|\,\varepsilon$.
For the second term, using \(\|u\|_{L^\infty([0,T])}<\infty\), the change of variables
\(r=t-\xi\), and then \eqref{eq:shared-soe-forced}, we get

\begin{align*}
&
\left|
\int_0^t
\Bigl(
g_{\alpha,\lambda}(t-\xi)
-
\sum_{m=1}^M d_m(\alpha,\lambda)e^{-(t-\xi)/\tau_m}
\Bigr)u(\xi)\,d\xi
\right|
\notag\\
&\le
\|u\|_{L^\infty([0,T])}
\int_0^t
\left|
g_{\alpha,\lambda}(t-\xi)
-
\sum_{m=1}^M d_m(\alpha,\lambda)e^{-(t-\xi)/\tau_m}
\right|d\xi
\notag\\
&=
\|u\|_{L^\infty([0,T])}
\int_0^t
\left|
g_{\alpha,\lambda}(r)
-
\sum_{m=1}^M d_m(\alpha,\lambda)e^{-r/\tau_m}
\right|dr
\notag\\
&\le
\|u\|_{L^\infty([0,T])}
\int_0^T
\left|
g_{\alpha,\lambda}(r)
-
\sum_{m=1}^M d_m(\alpha,\lambda)e^{-r/\tau_m}
\right|dr
\notag\\
&\le
\|u\|_{L^\infty([0,T])}\,\varepsilon.
\end{align*}

Substituting these two bounds into \eqref{eq:app-error-split} and taking the
supremum over \(t\in[0,T]\), we conclude that
\[
\sup_{t\in[0,T]}
|h(t)-\tilde h_M(t)|
\le
\varepsilon\Bigl(|h(0)|+\|u\|_{L^\infty([0,T])}\Bigr),
\]
which is exactly \eqref{eq:soe-realization-error}. This completes the proof.
\end{proof}

\section{Frac State Transition Algorithm }
\label{app:frac_transition_algo}
For compactness, Algorithm~\ref{alg:frac-state-transition} is written in vectorized form over heads \(H\) with head dimension \(d_h\). Equivalently, the same update is applied independently to each head with its own controls \(\Delta_t\), \(\alpha_t\), and \(\lambda_t\).

In our implementation, the maps \(g_{\mathrm{read}}\) and \(g_{\mathrm{write}}\) are parameterized as gated linear projections of the form
\[
g_{\mathrm{read}}(u)=\sigma(p_{\mathrm{read}})\,W_{\mathrm{read}}u,
\qquad
g_{\mathrm{write}}(u)=\sigma(p_{\mathrm{write}})\,W_{\mathrm{write}}u,
\]
where \(p_{\mathrm{read}}\) and \(p_{\mathrm{write}}\) are learnable scalar parameters and \(W_{\mathrm{read}},W_{\mathrm{write}}\) are learnable matrices.

\begin{algorithm}[h]
\caption{Frac state transition for one token $t$ (single-step recurrent form)}
\label{alg:frac-state-transition}
\begin{algorithmic}[1]
\Require
input $u_t \in \mathbb{R}^{H \times d_h}$,
step sizes $\Delta_t \in \mathbb{R}_{>0}^{H}$,
fractional controls $\alpha_t \in (0,1)^{H}$ and $\lambda_t \in \mathbb{R}_{>0}^{H}$,
previous mode state $S_{t-1} \in \mathbb{R}^{H \times M \times d_h}$,
base timescale bank tensor $\tau \in \mathbb{R}_{>0}^{M}$
\Statex content-dependent linear maps $g_{\mathrm{read}}, g_{\mathrm{write}} : \mathbb{R}^{H \times d_h} \to \mathbb{R}^{H \times M}$
\Statex
\Comment{Fractional specialization of the ZOH coefficients}
\State $\tilde{\tau}_t \gets \tau \oslash \lambda_t^{1/\alpha_t}$
\State $\tilde{\rho}_t \gets \exp(-\Delta_t \oslash \tilde{\tau}_t)$
\State $\tilde{\beta}_t \gets 1 - \tilde{\rho}_t$

\Statex
\Comment{Read and write weights over the mode bank}
\State $c_t \gets \mathrm{softmax}_m\!\left(-\alpha_t \log \tau + g_{\mathrm{read}}(u_t)\right)$
\State $b_t \gets \mathrm{softmax}_m\!\left(-\alpha_t \log \tau + g_{\mathrm{write}}(u_t)\right)$
\State $\kappa_t \gets \tilde{\beta}_t \odot b_t$

\Statex
\Comment{Selective recurrent update and readout}
\State $S_{t} \gets \tilde{\rho}_t \odot S_{t-1} + \kappa_t \odot u_t$
\State $h_t \gets \sum_{m=1}^M c_{t,:,m}\odot S_{t,:,m,:}$

\State \Return $h_t,\; S_{t}$
\end{algorithmic}
\end{algorithm}

\begin{remark}
The precise ZOH discretization, as we derive in \eqref{eq:rho-beta-def-frac}, gives $\tilde{\beta}_t = (1 - \tilde{\rho}_t ) / \lambda_t$. 
In Algorithm~\ref{alg:frac-state-transition}, however, we use $\tilde{\beta}_t = 1 - \tilde{\rho}_t$ (Step~3). We made this modeling choice to decouple timescale control from the write amplitude, which we found improves training stability. When $\lambda$ is input-independent, the two forms are equivalent up to a constant rescaling of the state and readout. All our experiments in this paper use $\tilde{\beta}_{t,m} = 1-\tilde{\rho}_{t,m}$, as in Algorithm~\ref{alg:frac-state-transition}.
\end{remark}

\begin{remark}
    The ZOH derivation in Section~\ref{subsec:frac-transition} is exact for a
fixed fractional mode bank with frozen values of \(\alpha\), \(\lambda\), and
\(\Delta\). The selective \fracssm{} layer applies the same transition
parameterization token-wise, with \(\alpha_t\), \(\lambda_t\), and
\(\Delta_t\) frozen within each interval, while the learned weights
\(b_{t,m}\) and \(c_{t,m}\) provide content-dependent write and read routing
over the resulting mode bank.  Thus, the fractional theory specifies the
per-token memory geometry and ZOH transition form, while selectivity adds
content-dependent routing over this fractional transition. 
Accordingly, the full selective layer can be viewed as an architectural generalization of the fixed-system fractional construction, which serves as a principled inductive bias for heavy-tailed memory.
\end{remark}

\section{Experimental Setting}
\label{app:experimental-setting}

\subsection{Heavy-Tail Synthetic Probing}
\label{app:heavy-tail-synth}
We use a controlled synthetic task to isolate long-range extrapolation under a heavy-tail aggregation law. Each sequence contains sparse $\pm 1$ events embedded in a background token, with inter-event gaps drawn from a truncated Zipf distribution. The label is the sign of a fixed power-law weighted sum over preceding events,
\[
y = \sum_k \frac{v_k}{(L-k)^{\gamma}},
\]

where the sum runs over all event positions $k$, each event carries value $v_k \in \{-1,+1\}$, and $\gamma=0.1$ is chosen so that the target depends on a strongly heavy-tailed accumulation over distance. We train all models on sequences of length $512$ and evaluate zero-shot extrapolation at $512, 2$K, $8$K, $16$K, $32$K, $64$K, and $128$K.
All models use a single sequence-mixing layer and, as shown in Table~\ref{tab:heavy_tail_model_sizes}, are approximately parameter-matched at about $0.2$M parameters. 
For \mambathree{}, we use the stable \textsc{SISO} variant, since the \textsc{MIMO} implementation in TileLang does not support hidden sizes small enough to match the parameter counts of the other models. Table~\ref{tab:heavy_tail_app} reports mean accuracy and standard deviation across runs for each model at the tested sequence lengths.

\begin{table}[h]
\centering
\small
\setlength{\tabcolsep}{7pt}
\begin{tabular}{lcccc}
\toprule
Model & Hidden & Heads & Head dim & Params \\
\midrule
Attention        & 128 & 8 & 16 & 199.3K \\
\mambatwo{} & 160 & 8 & 40 & 200.7K \\
\mambathree{} & 160 & 8 & 40 & 201.3K \\
\textsc{GDN}     & 96  & 8 & 32 & 203.3K \\
\textsc{Frac}    & 176 & 8 & 44 & 195.2K \\
\bottomrule
\end{tabular}
\caption{Model configurations for the heavy-tail synthetic benchmarking.}
\label{tab:heavy_tail_model_sizes}
\end{table}

\begin{table}[h]
\centering
\small
\setlength{\tabcolsep}{4.0pt}
\begin{tabular}{lccccccc}
\toprule
Model & 512 & 2K & 8K & 16K & 32K & 64K & 128K \\
\midrule
Attention
& $97.6{\scriptscriptstyle \pm 0.8}$
& $65.9{\scriptscriptstyle \pm 3.7}$
& $50.5{\scriptscriptstyle \pm 3.3}$
& $50.1{\scriptscriptstyle \pm 3.0}$
& $50.9{\scriptscriptstyle \pm 3.2}$
& $48.6{\scriptscriptstyle \pm 1.7}$
& $50.0{\scriptscriptstyle \pm 3.7}$ \\
\mambatwo{}
& $82.7{\scriptscriptstyle \pm 2.9}$
& $74.0{\scriptscriptstyle \pm 3.6}$
& $60.9{\scriptscriptstyle \pm 4.5}$
& $59.4{\scriptscriptstyle \pm 5.4}$
& $56.0{\scriptscriptstyle \pm 3.5}$
& $57.9{\scriptscriptstyle \pm 3.2}$
& $51.3{\scriptscriptstyle \pm 4.3}$ \\
\textsc{\mambathree{}}
& $\mathbf{98.6{\scriptscriptstyle \pm 0.7}}$
& $90.5{\scriptscriptstyle \pm 2.7}$
& $71.3{\scriptscriptstyle \pm 7.5}$
& $67.0{\scriptscriptstyle \pm 5.0}$
& $62.6{\scriptscriptstyle \pm 4.3}$
& $63.2{\scriptscriptstyle \pm 4.4}$
& $55.6{\scriptscriptstyle \pm 3.8}$ \\
\gdn{}
& $98.5{\scriptscriptstyle \pm 0.6}$
& $\mathbf{92.5{\scriptscriptstyle \pm 2.8}}$
& $78.7{\scriptscriptstyle \pm 3.4}$
& $73.7{\scriptscriptstyle \pm 3.5}$
& $68.4{\scriptscriptstyle \pm 2.0}$
& $65.8{\scriptscriptstyle \pm 2.5}$
& $58.4{\scriptscriptstyle \pm 3.8}$ \\
\fracssm{}
& $96.5{\scriptscriptstyle \pm 2.5}$
& $92.3{\scriptscriptstyle \pm 1.1}$
& $\mathbf{85.6{\scriptscriptstyle \pm 3.0}}$
& $\mathbf{80.4{\scriptscriptstyle \pm 1.4}}$
& $\mathbf{75.2{\scriptscriptstyle \pm 0.4}}$
& $\mathbf{70.6{\scriptscriptstyle \pm 2.0}}$
& $\mathbf{63.8{\scriptscriptstyle \pm 1.9}}$ \\
\bottomrule
\end{tabular}
\caption{Accuracy and standard deviation on  heavy-tail synthetic benchmark across $512$-$128$K sequence lengths on 10 runs. Bold indicates the best score at each sequence length.}
\label{tab:heavy_tail_app}
\end{table}

\subsection{MadLab Synthetic Suite}
\label{app:madlab}

\paragraph{Setup} MADLab~\citep{poli24_madlab} is a synthetic benchmark suite designed to probe token-level sequence manipulation and recall mechanisms. We consider the six standard tasks. \emph{Compression} (Comp.) tests whether a model can retain and compress information from the input sequence. \emph{In-context recall} (ICR) evaluates associative retrieval of values from keys presented in context. \emph{Noisy in-context recall} (N-ICR) adds distractor tokens to the same basic retrieval problem of ICR. \emph{Fuzzy in-context recall} (F-ICR) makes the association less direct by requiring recall from longer or less trivially matched motifs. \emph{Selective copying} (SC) measures the ability to copy only the relevant marked tokens from a sequence, and \emph{memorization} (Mem.) tests direct token-level memorization of the training distribution.

\paragraph{Model Configurations} 
We report the performance of four linear models: our \fracssm{}, \mambatwo{}, \gdn{}, and \mambathree{}. As in Appendix~\ref{app:heavy-tail-synth}, we use only the \mambathree{\small{\textit{-SISO}}} variant, since the \textsc{MIMO} implementation does not support sufficiently small hidden dimensions. All models use the same 4-layer architecture: two sequence-mixing layers, each followed by a SwiGLU channel-mixing layer.
All models use fixed hidden size \(d=128\) following the original protocol in the MADLab paper, and causal convolution size \(d_{\mathrm{conv}}=4\). We matched the parameter count to make all models have approximately 0.5M parameters. \mambatwo{} uses expansion \(2.0\), while all other models use expansion \(1.5\). All models use 8 heads, except for \fracssm{} which uses 6 heads. For \fracssm{}-specific hyperparameters we use the same setting as for language modeling experiments, the justification for which can be found in the next subsection.  

\paragraph{Training protocol}
Models are trained from scratch with AdamW, batch size \(128\) for \(200\) epochs, cosine learning-rate schedule, minimum learning rate \(10^{-6}\), bfloat16 precision, and \(1280\) test examples. 
We sweep learning rates \(\{1e^{-4},5e^{-4},1e^{-3}\}\) and weight decay values \(\{0,0.1\}\). 
For each task and model, we select the best hyperparameter setting by the mean official MADLab score across five seeds, and report the mean and standard deviation over the five runs for the selected setting.
We use the official MADLab baseline configurations for all task settings, including sequence length, vocabulary size, motif size, noise level, copy length, and multi-query flags. We do not report performance for N-ICR and ICR in the main paper as all models saturated on these two tasks. 

\subsection{Language Modeling}
\label{app:Language Modeling}

\paragraph{Pretraining Data} We leverage the deduplicated FineWeb-Edu (220B tokens total) subsets of the  SmolLM-Corpus~\citep{benallal2024smollmcorpus} as pretraining data in all of our experiments. FineWeb-Edu is a deduplicated, high-quality subset of educational web data filtered from the FineWeb-v1 collection~\cite{penedo2024the}. For our experiments, we randomly sample a 100B-token subset from deduplicated FineWeb-Edu, measured after tokenization with the Llama-2~\citep{arxiv23_llama2} tokenizer  of a vocabulary size of 32k tokens.

\paragraph{Model Configurations}

In our main experiment, all models are matched to roughly the same parameter count of 1.3B. For the baseline linear models, we adopt the default configuration of the 1B-parameter models from their respective publicly available repositories. We use the Qwen3~\cite{yang2025qwen3} architecture as the backbone for the Transformer model, as it achieves state-of-the-art performance among open-weight pure self-attention dense models. All models use a hidden size of 2048, an expansion factor for the intermediate state of 2.0\footnote{Except for \gdn{}, where we set it to 3.0 as in their default configuration.}, 16 heads, while the number of layers is determined by each model’s original configuration and adjusted so that the total parameter count is around 1.3B. More precisely, we use 48 layers for \mambatwo{}, \mambathree{\small{\textit{-SISO}}}, and \fracssm{}; 44 layers for \mambathree{\small{\textit{-MIMO}}}; and 22 and 26 layers for \gdn{} and the Transformer, respectively. 

For \fracssm{}, we apply min-max clipping of $\Delta \in [10^{-4}, 1.0]$, following Mamba~\citep{mamba2,lahoti2026mamba3}, and also clip our $\lambda \in [0.25, 4.0]$ for numerical stability. 
We set the number of modes to \(M=16\). This choice follows the scale suggested by Remark~\ref{rem:M-small}: a bank of a few tens of geometrically spaced modes is expected to cover many orders of magnitude of timescale at useful accuracy. For implementation, we use a power-of-two mode count to improve hardware utilization in our Triton kernels, making \(M=16\) the smallest power-of-two value in this regime. In the ablation study of Appendix~\ref{app:ablation}, \(M=8\) degrades performance, while \(M=32\) gives only the same or a small improvement at a higher computational cost.
For the timescale grid $\tau$, we use geometrically spaced base timescales between \(\tau_{\min}=1\) and \(\tau_{\max}=2^{17}\).
The upper endpoint is chosen to exceed the longest extrapolation length considered in our experiments, \(64\mathrm{k}=2^{16}\), so the slowest base mode remains active over the full evaluation horizon. 
The lower endpoint gives the bank access to local recurrent memory, while the logarithmic spacing allocates the remaining modes across intermediate scales. It is worth noting that linear baseline models like Mamba and \gdn{} have no explicit maximum-timescale parameter analogous to  \(\tau_{\max}\), while the Transformer can directly attend to the full context.

Importantly, introducing this finite mode bank does not give \fracssm{} a larger recurrent
state than the baseline linear models. To quantify this, we compare the recurrent SSM state tensors used during autoregressive
decoding, excluding short-convolution buffers and architecture-specific
auxiliary caches. In our configurations, \fracssm{}, \mambatwo{}, and
\mambathree{} use \(d_{\mathrm{inner}}=4096\) and \(H=16\), giving
\(d_h=d_{\mathrm{inner}}/H=256\). With \(M=16\) modes, \fracssm{} maintains
\(S_t^{\mathrm{FRAC}}\in\mathbb{R}^{H\times M\times d_h}\), corresponding to
\(16\times16\times256=65{,}536\) recurrent scalars per layer. The principal
SSM state in \mambatwo{} and \mambathree{} has shape
\(S_t^{\mathrm{Mamba}}\in\mathbb{R}^{H\times d_h\times N}\); with \(N=128\),
this corresponds to \(16\times256\times128=524{,}288\) scalars per layer.
Similarly, \gdn{} maintains a matrix-valued state
\(S_t^{\mathrm{GDN}}\in\mathbb{R}^{H\times d_k\times d_v}\); with \(d_k=96\)
and \(d_v=192\), this gives \(16\times96\times192=294{,}912\) scalars per
layer. Thus, Mamba2\&3 and \gdn{} maintain respectively
\(8\times\) and \(4.5\times\) as many recurrent-state scalars as \fracssm{}. We also note that recurrent-state size is different from trainable parameter count, and each architecture has to allocate weights to other parts of the token mixer. For example, \fracssm{} includes the projections producing
\(\Delta_t\), \(\alpha_t\), and \(\lambda_t\), together with its read/write
projections, whereas \gdn{} allocates parameters to its query, key, value,
gating, and output projections.

\paragraph{Implementation Details}

Each model is pretrained on 2 compute nodes, each equipped with 8 \textit{modern parallel compute cards} with 80GB of memory per card. We use the Hugging Face Transformers library~\citep{wolf2020transformers} as our pretraining framework. For baselines, we use the official Mamba2\&3\footnote{\url{https://github.com/state-spaces/mamba}} and GDN\footnote{\url{https://github.com/fla-org/flash-linear-attention/tree/main}} implementations, and the Qwen3 implementation in the Transformers library for the Transformer baseline. For all models, we use the AdamW~\cite{loshchilov2017decoupled} optimizer with a learning rate decay setting the initial learning rate to 6e-4 with 10\%  warm-up steps, with a global batch size of $1$M tokens. We use a weight decay of 0.1 for all models, following prior work. The selection of the learning rate was based on preliminary training runs, where it produced stable loss curves across all models. A higher learning rate, such as 1e-3, led to early training divergence, whereas learning rates substantially below 6e-4 produced worse loss curves for most models. To accelerate pretraining, we use Fully Sharded Data Parallel (FSDP)~\cite{zhao2023pytorch} and mixed-precision training~\citep{fp16}. Pretraining each model was completed in roughly 4 to 5 days (see Table~\ref{tab:app_latency}).
 
\paragraph{Evaluation Protocol} 

We mostly follow the evaluation protocol from Mamba2\&3 and GDN by conducting comprehensive evaluations of models we pretrain from scratch on 4 benchmark suites\footnote{Except for Real-World Recall-Retrieval tasks, all benchmarks are implemented through the LM Eval Harness library~\cite{eval-harness}.}:

\begin{itemize}
    \item \textbf{LM Harness} which includes perplexity evaluation on Wikitext (Wiki.)~\citep{merity2016pointer} and LAMBADA (LMB.)~\citep{paperno_lambada_2016}, as well as several commonsense reasoning tasks: PIQA~\citep{bisk2020piqa}, HellaSwag (Hella)~\citep{zellers2019hellaswag}, WinoGrande (Wino)~\citep{sakaguchi2021winogrande}, ARC-Easy (ARC-e) and ARC-Challenge (ARC-c)~\citep{allenai:arc}, OpenBookQA (OBQA)~\citep{mihaylov2018can}, and BoolQ~\citep{clark2019boolq}. 
    
    \item \textbf{NIAH} From the \textsc{RULER}~\cite{hsieh2024ruler} benchmark, we include three single \textit{needle-in-a-haystack} tasks: \textit{S-NIAH-1} (pass-key retrieval from repetitive filler text), \textit{S-NIAH-2} (retrieving a target number embedded in natural-text distractors), and \textit{S-NIAH-3} (retrieving a target UUID embedded in natural-text distractors). 

    \item \textbf{LongBench} We consider 14 tasks from the LongBench (v1) benchmark~\citep{bai2024longbench} and follow the authors’ categorizations: Single-document QA (NarrativeQA~\citep{kocisky-etal-2018-narrativeqa}, MultiFieldQA-en, Qasper~\citep{dasigi2021qasper}), Multi-document QA (HotpotQA~\citep{yang2018hotpotqa}, 2WikiMQA~\citep{ho2020constructing}, MuSiQue~\citep{trivedi2022musique}), Summarization (GovReport~\citep{huang_efficient_2021}, MultiNews~\citep{fabbri2019multinews}, QMSum~\citep{zhong2021qmsum}), Few-shot learning (TREC~\citep{li2002learning}, TriviaQA~\citep{joshi2017triviaqa}, SAMSum~\citep{gliwa2019samsum}), and Code Completion (LCC~\citep{guo2023longcoder}, RepoBench-P~\citep{liu2023repobench}).

    \item \textbf{Real-World Recall-Retrieval} We measure performance on real-world retrieval tasks, including: SWDE  \citep{lockard_openceres_2019}, SQuAD (SQD) \citep{rajpurkar_know_2018}, FDA \citep{wu_how_2021}, TriviaQA (TQA) \citep{joshi2017triviaqa}, NQ \citep{47761}, and Drop \citep{dua2019drop}. Following prior work~\citep{mamba2,lahoti2026mamba3,yang2025gdn}, we evaluate using the cloze-completion prompt format of~\cite{arora-2024-jrt}, with all sequences truncated up to 2k tokens.
   
\end{itemize}

\begin{table}[!tph]
\centering
\small
\begin{tabular}{llrcc}
\toprule
Task & Acr. & \# Ex. & Metric & Length \\
\midrule
NarrativeQA       & NQA & 200 & F1        & $18405 \pm 8976$ \\
Qasper            & QQA & 200 & F1        & $3619 \pm 1880$ \\
MultiFieldQA-en   & MFQ & 150 & F1        & $4559 \pm 2473$ \\
HotpotQA          & HQA & 200 & F1        & $9149 \pm 2849$ \\
2WikiMQA          & 2WM & 200 & F1        & $4885 \pm 2541$ \\
MuSiQue           & Mus & 200 & F1        & $11018 \pm 1560$ \\
GovReport         & GvR & 200 & ROUGE     & $8169 \pm 5239$ \\
QMSum             & QMS & 200 & ROUGE     & $10546 \pm 5043$ \\
MultiNews         & MNs & 200 & ROUGE     & $2114 \pm 1554$ \\
TREC              & TRC & 200 & Acc.      & $5176 \pm 2140$ \\
TriviaQA          & TQA & 200 & F1        & $8209 \pm 4029$ \\
SAMSum            & SSM & 200 & ROUGE     & $6258 \pm 3107$ \\
LCC               & LCC & 500 & Edit Sim. & $1235 \pm 1086$ \\
RepoBench-P       & RBP & 500 & Edit Sim. & $4206 \pm 2698$ \\
\bottomrule
\end{tabular}
\caption{Characteristics of the LongBench tasks, including acronym, number of examples, and evaluation metric. The last column reports the mean and standard deviation of example lengths, computed using the Llama-2 tokenizer used in our experiments. Eight out of 14 tasks have an average sequence length less than $8$K.}
\label{tab:longbench_stats}
\end{table}

\subsection{DNA Modeling}
\label{app:dna}

We follow the DNA language-modeling setup of HyenaDNA~\citep{nguyen2023hyenadna}\footnote{\url{https://github.com/HazyResearch/hyena-dna}} and Mamba~\citep{gu2024mamba}. We use the HG38 human reference genome data distributed through the HyenaDNA preprocessing pipeline, and train models with a character-level DNA tokenizer under the standard autoregressive next-token prediction objective. Given the training set, we truncate each example to a fixed sequence length. After training, we evaluate perplexity on the test set using the same truncation length as during training. We conduct this experiment for the following lengths \(\{1K, 4K, 8K, 16K, 32K, 64K\}\), with the resulting test perplexities shown in Figure~\ref{fig:dna}, comparing \fracssm{} against \gdn{} and \mambathree{\small{\textit{-SISO}}}.
All models use the same 8-layer decoder-only language-model core with hidden size \(256\), convolution size $4$ and eight heads. We approximately match model size to $7$M parameters by adjusting the model-specific expansion. More precisely, we use expansion factors of \(1.75\), \(2.0\), and \(1.5\) for \mambathree{}, \gdn{}, and \fracssm{}, respectively. Otherwise, we use the same model-specific hyperparameters as in the language-modeling experiments in Appendix~\ref{app:Language Modeling}. We sweep learning rates over the grid  $
\{1\mathrm{e}{-4}, 5\mathrm{e}{-4}, 1\mathrm{e}{-3}, 2\mathrm{e}{-3},
4\mathrm{e}{-3}, 8\mathrm{e}{-3}, 1\mathrm{e}{-2}, 2\mathrm{e}{-2}\}$ and select the best on the validation set. We use AdamW with weight decay $0.1$, and the same cosine learning-rate schedule.

\section{FracMixer Efficient Implementation}
\label{app:FracMixer Efficient Implementation}

In our implementation, the dominant components of \textsc{FracMixer} are implemented using custom Triton kernels. The inputs to the \textsc{Frac State Transition} block are first computed with standard matrix multiplications, followed by a fused Triton kernel that constructs the scan parameters, including the read/write mode weights and the $\Delta$, $\alpha$, and $\lambda$ factors. \textsc{Frac State Transition} scan is computed by a sequence of chunk-local Triton kernels for cumulative decays, within-chunk state construction, cross-chunk state passing, chunk-local read/write mixing, and the final scan output. In addition, the output skip connection is fused in that kernel. Although the fractional memory bank introduces multiple timescale modes, the computation is still organized around efficient chunked tensor operations.
Compared with \mambathree{\small{\textit{-MIMO}}}, FracMixer uses a simpler real-valued diagonal-mode update and does not employ complex rotations, trapezoidal two-tap state-inputs, or MIMO rank projections. However, \mambathree{\small{\textit{-MIMO}}} uses a lower-level CuTe/TileLang implementation, which is more optimized than our Triton, and a smaller rank $R\!=\!4$ compared to our $M\!=\!16$ modes.

\begin{table}[!thp]
\centering

\begin{tabular}{l|cccc|c}
\toprule
\bf Model & \bf Prefill$@1$K & \bf Prefill$@4$K & \bf Prefill$@16$K & \bf Decode & \bf Train$@4$K \\
\midrule
Transformer & \textbf{55,839} & \textbf{79,173} & 56,297 & \textbf{62}* & 218,184 \\
\mambatwo{} & 15,362 & 41,060 & 67,571 & \underline{34} & \bf 278,620 \\
\gdn{} & 10,455 & 35,481 & 58,384 & 20 & 215,357 \\
\mambathree{\small{\textit{-SISO}}} & \underline{19,517} & 44,998 & \underline{74,314} & 27 & 238,924 \\
\mambathree{\small{\textit{-MIMO}}} & 13,692 & 30,710 & 46,707 & 25 & 200,994 \\
\fracssm{} & 16,213 & \underline{49,946} & \textbf{77,490} & 33 & \underline{241,598} \\
\bottomrule
\end{tabular}
\caption{Inference throughput in tokens/s (higher is better) for prefill at $1$K, $4$K, and $16$K context lengths, and single-step decoding throughput. All prefill measurements use batch size $1$. *Decode throughput for the Transformer is reported at $16$K, while the corresponding values at $1$K and $4$K are $75$ and $72$ tokens/s, respectively. The last column reports training throughput (also in tokens/s) following the experimental setup described in Appendix~\ref{app:Language Modeling}.}
\label{tab:app_latency}
\end{table}

Table~\ref{tab:app_latency} reports the inference prefill throughput (tokens/s) of the 1.3B models of Appendix~\ref{app:Language Modeling}.\footnote{Throughput is measured on a single card of the same \textit{modern compute accelerator} used for training, at sequence lengths of $\{1\mathrm{K},4\mathrm{K},16\mathrm{K}\}$ with batch size 1. All models are evaluated under identical conditions.
} We also report autoregressive decoding throughput, averaged over the generation of 64 tokens. It is worth noting that higher prefill throughput at longer contexts reflects improved GPU utilization and better amortization of fixed overheads. For instance, a single $16$K-token sequence contains more tokens to process than a $4$K-token sequence.

We observe that the self-attention Transformer achieves the highest throughput on short sequences ($1$--$4$K) as well as on single token decoding, mainly due to the highly optimized FlashAttention kernel. Among linear models, \fracssm{} and \mambathree{\small{\textit{-SISO}}} are the closest to the Transformer in the short-context regime. However, at $16$K, the quadratic complexity of self-attention becomes more pronounced, causing the Transformer to lag behind linear models. In this long-context regime, \fracssm{} achieves the highest throughput, in the same range as \mambathree{\small{\textit{-SISO}}} and higher than \mambathree{\small{\textit{-MIMO}}}. In the training setting of Appendix~\ref{app:Language Modeling}, throughput depends on both the forward and backward implementations. We observe that all models fall within a similar throughput range, with \mambatwo{} being the most efficient and slightly outperforming our \fracssm{}.

It is worth noting that the diagonal state transition can introduce a
numerical issue analogous to the parallel formulation of GLA~\citep{yang2024gla}:
expressing the decay between positions $t$ and $k$ as a ratio of cumulative
exponentials can be numerically unstable. We therefore compute the intra-chunk
mixing coefficient directly using cumulative log-decay differences:
\begin{equation*}
\Gamma_{t,k}
=
\sum_{m=1}^{M}
c_{t,m}\kappa_{k,m}
\exp\!\left(\ell_{t,m}-\ell_{k,m}\right),
\qquad t\geq k,
\end{equation*}
where
\(
\ell_{t,m}:=\sum_{j=t_{\mathrm{start}}}^{t}
\log \tilde{\rho}_{j,m}
\)
denotes the cumulative log-decay of mode $m$ from the beginning of the
current chunk to position $t$. This formulation computes the decay between
positions $k$ and $t$ directly in log space and is analogous to the stable
formulation used in GLA~\citep{yang2024gla}. GLA uses a second level of
chunking to retain tensor-core acceleration for its relatively large
reduction dimension ($d_k\geq64$). In \fracssm{}, the corresponding reduction
is over only $M=16$ modes, making it inexpensive to compute directly within
a fused Triton kernel without tensor cores. We therefore use a single level
of chunking, with parallel intra-chunk computation and chunk-level state
passing, without requiring the additional sub-chunking used by GLA.

\section{\fracssm{} Ablation}
\label{app:ablation}

We conduct a set of ablation experiments on the \fracssm{Mixer} block to validate our architectural design choices. To this end, we define a language modeling setup in which \fracssm{} models with 390M parameters are pretrained from scratch on a 30B-token ($4$K packed sequences) subset of the corpus used in the experiments in Appendix~\ref{app:Language Modeling}. Specifically, we use a model with hidden size 1024, intermediate expansion factor 2.0, and 36 layers. Otherwise, all remaining configuration details follow those of the \fracssm{} 1.3B model in Appendix~\ref{app:Language Modeling}. We validate these ablations by measuring long-document language modeling perplexity on several datasets with $16$K-token sequences\footnote{Training is performed at $4$K sequence length.}: ProofPile\footnote{\url{https://github.com/zhangir-azerbayev/proof-pile}}, PG19~\citep{rae_compressive_2020}, and GovReport~\citep{huang_efficient_2021}. This evaluation protocol is better suited for small-scale models and pretraining experimental settings~\citep{lu2025mamba,yelongmamba}.

\begin{wraptable}{r}{0.52\linewidth}
\vspace{-3mm}
\centering
\small
\setlength{\tabcolsep}{3pt}
\begin{tabular*}{\linewidth}{@{\extracolsep{\fill}}lccc@{}}
\toprule
\textbf{Model} 
& \textbf{ProofPile} 
& \textbf{PG19} 
& \textbf{GovReport} \\
& \textbf{ppl$\downarrow$} 
& \textbf{ppl$\downarrow$} 
& \textbf{ppl$\downarrow$} \\  
\midrule
\textbf{\fracssm{}} & \bf 47.2 & \bf 28.4 & 11.3 \\
\hspace{2mm} $\alpha=1$ & 61.3 & 36.2 & 19.7 \\ 
\hspace{2mm} \textit{w/o} write prior & 68.5 & 41.6 & 18.2 \\ 
\hspace{2mm} pure multiscale bank & 76.3 & 47.1 & 17.2 \\
\hspace{2mm} \textit{w/o} softmax R/W & 80.4 & 36.9 & 15.5 \\
\hspace{2mm} learned timescales & 54.1 & 34.3 & 13.5 \\
\hspace{2mm} \textit{w/o} $D$ & 70.9 & 46.4 & 23.6 \\
\hspace{2mm} $M\!=\!8$ & 51.4 & 33.9 & 14.5 \\
\hspace{2mm} $M\!=\!32$ & 47.9 & 29.1 & \bf 10.6 \\
\bottomrule
\end{tabular*}
\caption{Perplexity scores of 390M-parameter \fracssm{} models under ablations of \fracssm{} design choices.}
\label{tab:frac_ablation_ppl}
\vspace{-4mm}
\end{wraptable}
First, we observe that making $\alpha$ non-learnable (\(\alpha=1\)) or removing the prior term for write\footnote{Removing \(-\alpha_t \log \tau\) in line~5 of Algorithm~\ref{alg:frac-state-transition}.} (\textit{w/o} write prior) causes a drastic drop in language-modeling performance, validating our design choices. In addition, dropping the direct feedthrough term \(D\) (\textit{w/o} \(D\)) leads to worse performance, aligning with prior observations that \(D\) is an important component in modern SSM mixers~\citep{gu2024mamba,mamba2,lahoti2026mamba3,yang2025gdn}. Second, we observe that reducing the number of modes to \(M=8\) consistently degrades performance, whereas increasing it to \(M=32\) yields performance comparable to the baseline at the cost of 5\% more parameters and 7\% slower computation. We therefore use \(M=16\) in all experiments.

To further isolate which components stemming from the FDE motivation are responsible for the gains, we conduct three additional ablations. First, we remove all fractional inductive biases and train a pure multiscale memory bank baseline. Here, we fix $\alpha=1$, remove the read and write fractional priors, and use freely learned timescales initialized independently rather than on a geometric grid (\textit{pure multiscale bank}). Second, we remove the softmax-normalized read and write weights while keeping learnable $\alpha$ (\textit{w/o} softmax R/W). Third, we keep our \fracssm{} layer with learnable $\alpha$, priors on read and write, and softmax normalization, and only replace $\tau$ on a geometric grid with freely learned timescales initialized independently (\textit{learned timescales}). As shown in Table~\ref{tab:frac_ablation_ppl}, all 3 ablations lead to worse performance compared to \fracssm{}.
These results further validate the importance of the fractional parameterization, the geometric timescale grid, and the normalized read/write routing.

\paragraph{Finite SoE approximation quality}
To directly evaluate the approximation quality of the finite mode bank, we consider the homogeneous fractional relaxation equation~\eqref{eq:fractional-relaxation} with $u(t)=0$ and $h(0)=1$, whose exact solution is $h(t)=E_{\alpha}(-\lambda t^\alpha)$. 
Using the normalized coordinate $s=\lambda^{1/\alpha}t$ we approximate $E_{\alpha}(-s^\alpha)$ using positive mixtures of the same geometrically
spaced timescales $\tau\in[1,2^{17}]$ as in \fracssm{}. The resulting range for the normalized time is $s\in[4,6553.6]$ because we use the normalized step $\Delta\lambda^{1/\alpha}=0.1$, so the $64$K-token horizon corresponds to $s=0.1\times65536=6553.6$, and we start at $s=4\tau_{\min}=4$ to focus on the regime covered by the exponential bank.
For each $\alpha\in\{0.10,0.11,\ldots,0.99\}$, we optimize the coefficients
$c_m$ of the fixed exponential bank under the simplex constraint
$c_m\geq0$, $\sum_m c_m=1$, minimizing the maximum absolute error of SoE approximation:
$$ 
\varepsilon_{\alpha,M}
=
\max_{s}
\left|
E_{\alpha}(-s^\alpha)
-
\sum_{m=1}^{M} c_m e^{-s/\tau_m}
\right|.
$$ 
We repeat this for $M\in\{8,16,32\}$ and report the mean of
$\varepsilon_{\alpha,M}$ over all values of $\alpha$ in Table~\ref{tab:finite-bank-approx}. The $M=16$ bank used in our experiments approximately halves the mean maximum error relative to $M=8$, while increasing to $M=32$ provides only a marginal further improvement.

\begin{table*}[h]
\centering
\begin{tabular}{lccc}
\toprule
Modes ($M$) & 8 & 16 & 32 \\
\midrule
Mean maximum error
& $1.34\times10^{-3}$
& $6.82\times10^{-4}$
& $6.45\times10^{-4}$ \\
\bottomrule
\end{tabular}
\caption{Approximation error of the finite geometric mode bank.}
\label{tab:finite-bank-approx}
\end{table*}

\section{Limitations and Broader Impact}
\label{app:limitations-impact}

\subsection{Limitations}
\label{app:limitations}

Our work has several limitations. First, \fracssm{} realizes the fractional kernel through a finite sum-of-exponentials approximation over a fixed geometric bank of timescales. This makes the model practical and scan-compatible, but the approximation quality depends on the number of modes and on whether the selected timescale range covers the dependencies required by the task. In applications where the relevant memory scale lies outside this range, or where the task is dominated by short-tailed dependencies requiring rapid forgetting, the advantage of the method may be reduced. 
Second, the computational cost of \fracssm{} scales with the number of modes \(M\). The transition within each mode is simple and diagonal, but the layer maintains a bank of \(M\) recurrent states and performs mode-wise read and write operations at every step. Thus, while the method remains linear in sequence length and bounded-state, efficient large-scale use requires care in kernel design and memory layout.
Finally, our empirical evaluation focuses on settings where long-range dependencies are central, including synthetic long-memory tasks and long-context language modeling. While these experiments directly test the intended use case of \fracssm{}, they do not exhaust the possible domains where fractional memory may be useful. Evaluating \fracssm{} in additional long-horizon settings, such as video, scientific sequences, and persistent memory for agentic systems, is an important direction for future work.

\subsection{Broader Impact}
\label{app:broader-impact}
The potential positive impact of this work is to improve the efficiency and long-context capability of sequence models by providing a bounded-state recurrent architecture with an improved memory law. This may support applications where long-range dependencies are important, including language modeling, scientific sequences, time-series data, video, and persistent memory for agentic systems. At the same time, stronger long-context models can inherit the usual risks of capable sequence models, including misuse in text or code generation, surveillance, automated decision support, and systems that reproduce biases or private information present in training data. Future releases of large pretrained checkpoints based on \fracssm{} should therefore include application-appropriate safeguards, such as dataset review, misuse evaluation, privacy protections, and access controls when needed.

\end{document}